\theoremstyle{plain}
\newtheorem{theorem}{Theorem}[section]
\newtheorem{lemma}[theorem]{Lemma}
\newtheorem{corollary}[theorem]{Corollary}
\theoremstyle{definition}
\newtheorem{definition}[theorem]{Definition}
\newtheorem{assumption}{Assumption}
\theoremstyle{remark}
\newtheorem{remark}[theorem]{Remark}
\newtheorem*{theorem*}{Theorem}
\newtheorem*{lemma*}{Lemma}
\newtheorem*{corollary*}{Corollary}
\newcommand{\calA}{\ensuremath{\mathcal{A}}}
\newcommand{\calB}{\ensuremath{\mathcal{B}}}
\newcommand{\calC}{\ensuremath{\mathcal{C}}}
\newcommand{\calD}{\ensuremath{\mathcal{D}}}
\newcommand{\calP}{\ensuremath{\mathcal{P}}}
\newcommand{\calR}{\ensuremath{\mathcal{R}}}
\newcommand{\calS}{\ensuremath{\mathcal{S}}}
\newcommand{\calU}{\ensuremath{\mathcal{U}}}
\newcommand{\boldw}{\ensuremath{\mathbf{w}}}
\newcommand{\boldv}{\ensuremath{\mathbf{v}}}
\newcommand{\boldr}{\ensuremath{\mathbf{r}}}
\newcommand{\bA}{\ensuremath{\mathbf{A}}}
\newcommand{\bC}{\ensuremath{\mathbf{C}}}
\newcommand{\bY}{\ensuremath{\mathbf{Y}}}
\newcommand{\bSig}{\ensuremath{\mathbf{\Sigma}}}
\newcommand{\avgw}{\ensuremath{\overline{\boldw}}}
\newcommand{\avgv}{\ensuremath{\overline{\boldv}}}
\newcommand{\avgz}{\ensuremath{\overline{\mathbf{z}}}}
\newcommand{\argmin}{\ensuremath{\text{argmin}}}
\newcommand{\Expec}[2]{\ensuremath{\mathbb{E}_{#1}\left[ #2 \right]}}
\newcommand{\tr}{\ensuremath{\mathbf{Tr}}}
\newcommand{\feast}{\textsc{F3ast} }
\newcommand{\feastcomma}{\textsc{F3ast,} }
\newcommand{\vargrad}{\textsc{F3ast} }
\newcommand{\fedavg}{\textsc{FedAvg} }
\newcommand{\poc}{\textsc{PoC} }
\newcommand{\fedopt}{\textsc{FedOpt}\,\,}
\newcommand{\fedadam}{\textsc{FedAdam}\,\,}
\def\clineThicknessColor#1#2#3{\@ClineThicknessColor#1\@nil{#2}{#3}}
\def\@ClineThicknessColor#1-#2\@nil#3#4{%
    \omit
    \@multicnt#1%
    \advance\@multispan\m@ne
    \ifnum\@multicnt=\@ne\@firstofone{&\omit}\fi
    \@multicnt#2%
    \advance\@multicnt-#1%
    \advance\@multispan\@ne
    \color{#4}
    \leaders\hrule\@height#3\hfill
    \leaders\hrule\@height#3\hfill
    \cr}
\newcommand{\narxiv}[1]{}
\renewenvironment{abstract}
 {\small
  \begin{center}
  \bfseries \abstractname\vspace{-.5em}\vspace{0pt}
  \end{center}
  \list{}{
    \setlength{\leftmargin}{6mm} 
    \setlength{\rightmargin}{\leftmargin}
  }
  \item\relax}
 {\endlist}
\begin{document}

\title{Federated Learning Under Intermittent Client Availability and Time-Varying Communication Constraints}
\author{M\'onica Ribero\thanks{Dept. of Electrical and Computer Engineering, The University of Texas at Austin.  } \and Haris Vikalo \footnotemark[1] \and Gustavo De Veciana \footnotemark[1]}
\maketitle

\begin{abstract}
Federated learning systems facilitate training of global models in settings where potentially heterogeneous data is distributed across a large number of clients. Such systems operate in settings with intermittent client availability and/or time-varying communication constraints. As a result, the global models trained by federated learning systems may be biased towards clients with higher availability. We propose \feast \!\!, an unbiased algorithm that dynamically learns an availability-dependent client selection strategy which asymptotically minimizes the impact of client-sampling variance on the global model convergence, enhancing performance of federated learning. The proposed algorithm is tested in a variety of settings for intermittently available clients under communication constraints, and its efficacy demonstrated on synthetic data and realistically federated benchmarking experiments using CIFAR100 and Shakespeare datasets. We show up to 186\% and 8\% accuracy improvements over \fedavg,  and 8\%  and 7\% over \fedadam on CIFAR100 and Shakespeare, respectively. \end{abstract}

\section{Introduction}
\label{sec:intro}

Federated learning (FL) has emerged as an attractive framework for training machine learning models in settings where the data is distributed among remote clients and must remain local due to privacy concerns and/or resource constraints. In the original Federated Averaging algorithm (\textsc{FedAvg}) \cite{mcmahan17a}, as well as more recent approaches including \textsc{Scaffold} \cite{karimireddy2020scaffold}, Federated Adaptive Optimization \cite{reddi2020adaptive}, \textsc{FedDyn} \cite{acar2021federated} and FedProx \cite{li2018federated}, a server selects a random subset of clients and which will participate in updating a global model by training on local data. The server aggregates the clients' updates to produce a new global model, broadcasts it to the clients, and a new round of training begins; this procedure is repeated until convergence. 

One of the biggest gaps between theory and practice of FL is due to biased sampling of clients caused by their on-and-off availability and communication constraints \cite{bonawitz2019towards, eichner2019semi, WCXJ21}. 
For example, in cross-device settings including mobile device systems \cite{mcmahan17a}, a vast number of client devices \cite{kairouz2019advances} with limited communication and power resources \cite{bonawitz2019towards} \textit{intermittently} connects to a central server to help optimize a global objective. Existing FL algorithms typically ignore intermittency and assume that the participating client devices are always available and thus can be tasked with performing a model update at any time \cite{mcmahan17a, cho2020client, li2019convergence, acar2021federated}. If not addressed by the system design, time-varying communication constraints and intermittent client availability (due to battery and other device-specific limitations) may cause significant degradation of the learned model performance \cite{eichner2019semi, ribero2020communication, cho2020client}. 




To illustrate the potential severity of the problem described above, and preview the contributions of this paper, consider the following simple example. Let $c_1$ and $c_2$ be two clients with distinct data distributions. A server aims to optimize the function $F(\boldw) =p_1F_1(\boldw)+p_2F_2(\boldw)$ over $\mathbb{R}^p$, where $F_1$ and $F_2$ denote the loss functions at clients $c_1$ and $c_2$, respectively, and for simplicity $p_1 = p_2 = 1/2$. We shall consider a model for the clients' intermittent availability characterized by the joint distributions given in \cref{tab:availabilities-1},
where $A_i$ is a binary random variable indicating whether client $c_i$ is available. In this model, clients' availabilities in a given round are independent, with $P(A_1=1)=0.375$ while $P(A_2=1)=0.8$. 
Note the client availabilities are assumed to be independent across rounds. Suppose there is a communication constraint which restricts the server to sampling at most a single client each round. The server must thus choose a possibly client state dependent policy for selecting the clients in each round. Each such policy would achieve certain long-term client participation rates across the rounds, denoted $\boldr = (r_1,r_2)$. For example, under the model in Table~1, the set of achievable long-term participation rates across all possible policies is given by the region $\calR_1$ shown in \cref{fig:constraint-region-1}. Given the communication constraints, it is not possible to achieve the full client participation rates of $\boldr =(1,1)$ because clients are not always available and we can only sample one client in each round. However, $\boldr^{a}=(0.375,0)$ is achievable by using the state-dependent deterministic policy which selects $c_1$ whenever $c_1$ is available and never selects $c_2$. Alternatively, $\boldr^{b}=(0.375,0.5)$ is also achievable by selecting $c_1$ whenever $c_1$ is available, and $c_2$ when only $c_2$ is available. A naive selection policy  that samples from available clients with probability proportional to $p_i=\frac{1}{2}$ in hope of achieving ``ideal" participation rate $(\frac{1}{2},\frac{1}{2})$ \cite{li2019convergence} would actually result in client $c_1$ participating at a rate of  
$$
    r^c_1 = P( A_1 =1, A_2 =0) + \frac{P(A_1 = A_2 =1 )}{2}= 0.225.
$$
Analogously, the long-term participation rate of client $c_2$ under the same naive selection policy is $r^c_2=0.65$. 

As demonstrated in \cref{sec:model} (\cref{thm:importance_convergence}), improper choice of the long-term participation rate $\boldr$ injects bias and variance into the global model. 
Therefore, selecting an ``appropriate" rate is of fundamental importance; yet, as illustrated above, intermittent client availability and communication constraints present several previously overlooked challenges: (i) determining the long-term participation rate $\boldr^* \in \calR$ which is best in terms of its impact on the convergence of federated learning, and 
(ii)  design of a client selection policy that achieves rate $\boldr^*$. 
These are particularly demanding because the (possibly correlated) clients' availability patterns are unknown and, therefore, $\calR$ is unknown. 

\begin{table}
\caption{Client availability model: The availability is independent across time and clients.}
\vspace{-0.05in}
\label{tab:availabilities-1}
\centering
\begin{tabularx}{0.45\textwidth}{|X|X|X|X|}
\hline
          & $A_2=1$ & $A_2=0$ & Marginal \\ \hline
$A_1=1$     & 0.3   & 0.075   &  0.375     \\ \hline
$A_1=0$ & 0.5 & 0.125     & 0.625     \\ \hline
Marginal & 0.8 & 0.2     &           \\ \hline
\end{tabularx}
\end{table}


\begin{figure}
    \centering
    \begin{tikzpicture}
    \draw[->, thick] (0,-0.3) -- (0,2.3);
    \draw[->, thick] (-0.3,0) -- (1.3,0);
    \draw[orange, ultra thick] (0,0) -- (0,1.6)--(0.15,1.6) --(0.75,1.) --(0.75,0) -- cycle node[right] at (0.05,0.8) {$\calR_1$};
    \filldraw[blue] (1,1) circle (1.5pt) node[anchor=west]{\footnotesize Ideal rate};
    \node at (1.6 ,0.65) {\tiny $(\frac{1}{2},\frac{1}{2})$};
    \filldraw[black] (0.75,0) circle (1.5pt) node[above right] at (0.75,0)
    {\footnotesize $\boldr^{a}$};
    \filldraw[black] (0.75,1.) circle (1.5pt) node[above] at (0.75,1.)
    {\footnotesize $\boldr^{b}$};
    \filldraw[black] (0.45,1.3) circle (1.5pt) node[above] at (0.45,1.3)
    {\footnotesize $\boldr^{c}$};
    \draw[gray, ultra thin] (0.8,-0.07) -- (0.8,0.07) node[below] at (0.8,-0.1) {\footnotesize $r_1$};
    \draw[gray, ultra thin] (-0.07,1.6) -- (0.07,1.6) node at (-0.3,1.6) {\footnotesize $r_2$};
    \end{tikzpicture}
    \caption{The region of achievable long-term participation rates under the client availability model in Table~1.}
    \label{fig:constraint-region-1}
\end{figure}
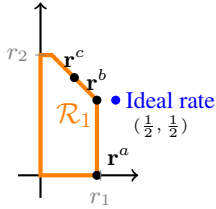

\vspace{-0.05in}
\paragraph{The main contribution of this paper: Learning to sample clients with intermittent client availability.} In this paper we introduce \feast\!\!, a federated learning algorithm that additionally learns and adapts to the \textit{unknown} statistics of client availability and the variability in the FL system capacity (i.e., adapts to time-varying communication constraints). \feast\!\! is shown to be asymptotically optimal (see \cref{thm:sampling_rate_convergence}) 
as its long-term participation rate converges to the value minimizing a bound on the global model variance over the space of achievable rates. Remarkably, \feast accomplishes this \textit{with no prior knowledge of the communication constraints or clients' availability models}. To our knowledge, this is the first work to \textit{formally} address client intermittency and system capacity variability in federated learning, and the first work to propose a method to \textit{learn how to select clients} while pursuing a shared (global) model within the federated learning framework.

\vspace{-0.05in}


\paragraph{Extensive experimentation on realistic tasks and data.} \feast is tested on three benchmark datasets: (i) Synthetic(1,1) \cite{shamir2014communication}, a widely used heterogeneous synthetic dataset for softmax regression \cite{li2018federated}; (ii) a realistically federated version of CIFAR100 \cite{reddi2020adaptive}; and (iii) Shakespeare \cite{mcmahan17a}. We demonstrate that in learning highly non-linear models \feast exhibits more stable convergence and considerably higher accuracy than state-of-art algorithms. Moreover, \feast\!\!'s selection and aggregation method is readily combined with the existing optimization techniques designed to address system's constraints, allowing those methods to take advantage of \feast\!\!'s policies: experiments confirm that incorporating \feast reduces bias of algorithms that do not compensate for client selection uncertainties, and demonstrate much more stable descent trajectories to the optimum even in highly time-varying environments. 
\section{Background and Related Work}
\label{sec:background}

\subsection{Federated Learning}
Given a set $\calU$ with $N$ clients, each having $n_k$ data samples, a federated learning system is concerned with solving
\begin{equation}
    \label{eq:objective_function}
    \min_{\boldw \in \mathbb{R}^{^p}}  \mathbb{E}_{k\sim \calP}[F_k(\boldw)],
\end{equation}
where $ F_k(\boldw) = \mathbb{E}_{\xi \sim \calD_k}[f_k(\boldw; \xi)]$
denotes the loss function of client $k$ and $\calP$ is the distribution over users. 
The generalized \fedavg algorithm, \fedopt \cite{reddi2020adaptive}, interactively learns the global model by \textit{randomly selecting} at time $t$ a subset of clients $S_t$ to locally optimize their objective function\footnote{E.g., the original \fedavg algorithm coordinates $E$ epochs over training data.} starting from the initial model $\avgw^t$, and communicate their updates $\boldv_k^{t+1}$ to the server. Then, the server \textit{aggregates} the received updates $(\boldv_k^{t+1})_{k \in S_t}$ to produce the global update $\Delta^{t+1}$ and generate a new global model $\avgw^{t+1}$. For the remainder of the paper we refer to the process of going from $\avgw^t$ to $\avgw^{t+1}$ as a (single) round of FL. The process is repeated with the aim of finding accurate global model. Heterogeneity due to generally non-i.i.d. and unbalanced data available to different clients emerges as one of the main challenges in federated learning, and thus the choice of a client sampling scheme heavily impacts convergence of the global model. Several authors have addressed this problem, proposing different optimizers \cite{reddi2020adaptive} and client sampling strategies \cite{cho2020client, li2019convergence}, but all showed convergence (or near convergence) only under the strong assumption of being able to sample any client at any time. A technique that deals with heterogeneity by allowing each client to learn a personalized model was proposed in \cite{smith2017federated}.

Several approaches to addressing communication constraints in FL systems have recently been proposed in literature, including strategies aiming to reduce client communication rate via model compression \cite{tang2019doublesqueeze,konevcny2016federated,suresh2017distributed,konevcny2018randomized,alistarh2017qsgd,horvath2019natural}. These are orthogonal to our work since we focus on settings where the clients are intermittently available. 

\paragraph{Client availability.} We assume that, at any time, the set of available clients is random and non-empty, and that the constraint on the number of clients selected to train and provide model updates to the server generally varies over time.
Note that, in such scenarios, applying recent stateful optimization techniques such as \textsc{Scaffold} \cite{karimireddy2020scaffold} and \textsc{FedDyn} \cite{acar2021federated} is challenging due to hardware constraints and high number of participating devices \cite{WCXJ21}.
Prior work has investigated client availability under restrictive conditions such as block-cyclic data characteristics \cite{eichner2019semi, cho2020client}, assumed i.i.d. availability across clients that act as stragglers \cite{li2018federated}, or produced biased models \cite{xia2020multi}. 
However, availability is much more difficult to model in practice. Although certain patterns such as day/night are cyclic, there also exist various other non-cyclic client or cluster-specific patterns that affect some clients more than others, including access to a power source and the available communication bandwidth. 

\subsection{Client sampling and averaging}
Since the number of clients in federated learning systems can be extremely large \cite{hard2018federated, yang2018applied}, only a relatively small subset of them is tasked with training in each round. Data heterogeneity and communication constraints have inspired several strategies for selecting clients from the available pool.
Some of those techniques take into account the proportion of data at each client, which we denote by $p_k$ \cite{li2019convergence}. Alternative strategies apply active learning ideas to client selection and select those that are more promising according to some metric, e.g., choose clients with the largest magnitude of the updates \cite{hsieh2017gaia, chen2018lag, singh2019sparq, ribero2020communication, rizk2020federated} or those with the highest loss \cite{cho2020client}.

Another line of related prior work has been focused on investigating model aggregation strategies. In \cite{ribero2020communication}, the authors assume that stochastic optimization updates approximately follow a stationary stochastic process, and cast the model aggregation as an estimation problem. An alternative aggregation strategy is to form an unweighted average of the updates \cite{cho2020client}; however, this leads to a biased model and large variance. Other approaches trade communication and memory for stable convergence \cite{karimireddy2020scaffold}, or replace missing updates with the previous model \cite{chen2018lag, singh2019sparq}. In the centralized setting, importance sampling has been used to optimally aggregate SGD updates \cite{zhao2015stochastic}, \cite{rizk2020federated}. 

Previous works on client sampling in FL systems do not provide formal convergence guarantees for settings where the clients are intermittently available.
\citet{cho2020client,eichner2019semi} study effects of cyclically alternating client availability and propose a sampling strategy empirically shown to improve over random sampling; however, the resulting models may be biased and their performance under non-cyclic availability patterns is unclear. 

Alternatively, asynchronous methods address client selection under system heterogeneity  with a fixed selection policy determined by clients training speed: updates are incorporated individually as they arrive at the server \cite{CNSR20} or, when there are privacy concerns, they are aggregated in buffers \cite{NMZYRMH21}. 
\section{Methods}
\label{sec:model}

In this section we present and analyze a novel framework for selecting and aggregating intermittently available clients in federated learning systems that operate under time-varying communication constraints. In such settings, the contribution each client makes to the federated averaging process depends on how often the client is selected to provide an update
-- i.e., on the {\em long-term client participation rate}. We start by characterizing the set $\calR$ of achievable long-term client participation rates subject to communication and client availability constraints.
Then, we introduce \feastcomma an algorithm that dynamically learns clients' long-term participation rate and improves the convergence of federated learning by reducing the model bias and minimizing variance introduced by sampling intermittently available clients. The omitted proofs can be found in the appendix. 

\subsection{Preliminaries}


\paragraph{Communication constraints and intermittent client availability.}
Consider a FL system in which a random subset of clients $\bA_t$ is available/responsive at time $t$; here $(\bA_1,\bA_2,\dots) =(\bA_t)_t$ form a discrete-time stochastic process with a finite state space $\calA = 2^\calU$, i.e., the collection of all possible subsets of the set of users $\calU$. Communication constraints restrict the possible subsets of clients that can be chosen to participate during a training round; we let $\bC_t$ denote the (random) collection of the available clients sets that meet communication constraints at time $t$ and denote its state space by $\mathcal{C}$; given $\bA_t = A$, a realization $\bC_t = C$ corresponds to a collection of subsets of $A$, i.e., $C \subseteq 2^{A}$. For convenience, in the remainder of the paper we refer to $\bC_t$ as the system {\em configuration}.

To illustrate the use of the introduced notation, consider the communication-constrained setting where the number of clients allowed to participate in training round $t$ is no more than (possibly random) $K_t$. Given a realization of the set of clients available at time $t$, $\bA_t = A$, and the aforementioned communications constraint $K_t = k \in \mathbb{N}$, the collection of feasible client sample sets $S$ that the server may choose to include in the training round  is
\[
C = \{S \subset A : |S| \leq k \}.
\]
If the communication constraints are not time-varying, i.e., if $K_t=k$ 
almost surely for all $t$, we are back in the traditional \fedavg 
cross-device setting.

\begin{assumption}
\label{assum:constraint_process}
The sequence of random collections of feasible client sampling  sets $(\bC_t )_t $ forms a discrete-time irreducible Markov chain with a finite state space $\calC \subseteq 2^{\calU}$ and a stationary distribution $\pi = (\pi(C) , C\in \calC)$.
\end{assumption}

\cref{assum:constraint_process} significantly relaxes assumptions typically made when analyzing convergence of state-of-the-art FL algorithms, e.g., the much stronger assumptions of all users having unlimited availability \cite{mcmahan17a, rabi2012adaptive, mohri2019agnostic, li2018federated, acar2021federated, karimireddy2020scaffold} or a deterministic block-cyclic availability \cite{eichner2019semi}. \cref{assum:constraint_process} captures various realistic settings including that of home devices available with a given probability - not necessarily uniform across clients - throughout the day. Our experimental results showcase that in several realistic settings which meet \cref{assum:constraint_process}, our method provides significant performance improvements while other techniques fail to adapt to unknown availability models.

\paragraph{Static configuration-dependent client sampling policies.} Communication constraints and intermittent client availability restrict the space of admissible long-term client participation rates. Indeed, it is unrealistic to expect being able to sample an arbitrary collection of $k$ clients at time $t$ with pre-specified probabilities $\calP = (p_i :i=1 \ldots N)$ since some of those $k$ clients may be unavailable, and/or time horizon is not long enough to achieve certain rate (see examples in \cref{sec:intro}). 
To characterize {\em achievable} long-term participation rates we introduce the following concepts.

Recall that for a given configuration of communication constraints and client availabilities there exists an associated collection of feasible client sample sets $C$ that a sampling policy can choose from. We define a {\em static configuration-dependent client sampling policy} as follows. 
\begin{definition}
\label{def:configuration-dependent}
For each $C$, let $f_{C,S}\geq 0$ denote the probability of selecting the subset of clients $S \in C$, where $\sum_{\calS \in C} f_{C,S}=1$. If we denote $\bm{f}_C = (f_{C,S}, S \in C)$, then $\bm{f} := (\bm{f}_C, C \in \calC)$ specifies a static configuration-dependent sampling policy selecting clients over different communication/availability configurations.
\end{definition}

Let ${\cal F}$ denote the set of possible static configuration-dependent client sampling policies. 
Under the above model, the long-term client participation rate can be expressed as
\begin{equation}
\label{eq:client-rate}
\boldr^{\bm{f}} = \sum_{C \in \calC}\pi(C)\sum_{S \in C} f_{_{C,S}}\mathds{1}_{S},
\end{equation}
where $\mathds{1}_{S}$ is an $N$-dimensional binary indicator vector whose $i^{th}$ entry is $1$ if the $i^{th}$ client is in $S$, and is $0$ otherwise.
One can interpret the $i^{th}$ component of vector $\boldr^{\bm{f}}$ as the fraction of time the $i^{th}$ client is selected by the server.


Finally, we define the long-term client participation rate region as the set of all possible long-term participation rate vectors $\boldr^{\bm{f}}$, i.e., $\calR := \{\boldr^{\bm{f}} | \bm{f} \in {\cal F} \}$.

\begin{lemma}
\label{lemma:constraint_region}
The long-term client participation rate region $\calR = \{\boldr^{\bm{f}} | \bm{f} \in {\cal F} \}$ is a subset of the simplex in the $N$-dimensional Euclidean space, and a closed convex set. 
\end{lemma}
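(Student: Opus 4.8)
The plan is to treat all three claims---containment in the simplex, convexity, and closedness---as consequences of the single structural observation that the coefficients appearing in \cref{eq:client-rate} form a probability distribution. Concretely, since $\pi$ is a stationary distribution we have $\pi(C)\ge 0$ and $\sum_{C\in\calC}\pi(C)=1$, and since each $\bm{f}_C$ is a distribution over $C$ we have $f_{C,S}\ge 0$ with $\sum_{S\in C}f_{C,S}=1$. Hence the weights $w_{C,S}:=\pi(C)f_{C,S}$ satisfy $w_{C,S}\ge 0$ and $\sum_{C\in\calC}\sum_{S\in C}w_{C,S}=\sum_{C}\pi(C)=1$, so $\boldr^{\bm{f}}=\sum_{C,S}w_{C,S}\mathds{1}_{S}$ is a convex combination of the binary indicator vectors $\{\mathds{1}_{S}\}$. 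Each such vector lies in $\{0,1\}^N$, so every coordinate of $\boldr^{\bm{f}}$ is a convex combination of $0$'s and $1$'s and therefore lies in $[0,1]$; this gives $\calR\subseteq[0,1]^N$, establishing the first claim (each $r_i$ is a valid participation fraction).

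For convexity I would exhibit $\calR$ as the image of a convex set under an affine map. First observe that $\calF$ is convex: given policies $\bm{f},\bm{g}\in\calF$ and $\lambda\in[0,1]$, the mixture $\bm{h}=\lambda\bm{f}+(1-\lambda)\bm{g}$ satisfies $h_{C,S}\ge 0$ and $\sum_{S\in C}h_{C,S}=\lambda+(1-\lambda)=1$ for every $C$, so $\bm{h}\in\calF$. Next, with $\pi$ fixed, the map $\Phi:\bm{f}\mapsto\boldr^{\bm{f}}$ defined by \cref{eq:client-rate} is linear in its argument. Consequently $\boldr^{\bm{h}}=\lambda\,\boldr^{\bm{f}}+(1-\lambda)\,\boldr^{\bm{g}}$, so any convex combination of points of $\calR$ again lies in $\calR$, and $\calR=\Phi(\calF)$ is convex.

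For closedness I would argue compactness. By \cref{assum:constraint_process} the state space $\calC$ is finite, and since $\calU$ is finite each configuration $C$ contains finitely many subsets $S$; thus $\bm{f}$ ranges over a finite-dimensional Euclidean space and $\calF$ is a finite product of probability simplices $\{\bm{f}_C:\ f_{C,S}\ge 0,\ \sum_{S} f_{C,S}=1\}$. Each factor is closed and bounded, hence compact, so $\calF$ is compact. The map $\Phi$ is linear on a finite-dimensional space and therefore continuous, and the continuous image of a compact set is compact; thus $\calR=\Phi(\calF)$ is compact and in particular closed. Combining the three parts yields the lemma.

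There is no substantial obstacle here---the argument is essentially bookkeeping---so the only thing to be careful about is the finiteness/compactness input: everything rests on $\calC$ and $\calU$ being finite, which is what lets us pass from ``convex image of a convex set'' to ``closed convex set'' via compactness rather than having to argue closedness by hand. I would also flag a minor abuse of terminology: because $\sum_i r_i^{\bm{f}}$ equals the expected number of clients sampled per round and need not equal $1$, ``simplex'' should here be read as the unit hypercube $[0,1]^N$ (equivalently, the convex hull of the feasible selection vertices $\mathds{1}_{S}$), not the probability simplex.
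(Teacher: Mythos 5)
Your argument is correct and is essentially the paper's own proof, which states in one line that $\calR$ is the linear image of the set of all possible $\bm{f}$, ``a closed bounded convex set''; you have simply filled in the details (containment via convex combinations of the indicator vectors $\mathds{1}_S$, convexity via linearity of $\bm{f}\mapsto\boldr^{\bm{f}}$, and closedness via compactness of the finite product of probability simplices together with continuity of the linear map). Your side remark that the region actually sits in the hypercube $[0,1]^N$ rather than the probability simplex, since $\sum_i r_i$ is the expected number of clients sampled per round and need not equal $1$, is a fair reading of the statement's loose wording and does not affect the substance of the lemma.
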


\begin{proof}
The lemma follows from the fact that $\calR$ is a linear image of all possible $\bm{f}$, a closed bounded convex set. 
\end{proof}

\subsection{\feast\!: Minimizing the sampling variance}
\label{sec:feast}
Here we formally introduce an algorithm that learns a client selection policy which ensures that the resulting long-term client participation rate converges to a value minimizing 
\begin{equation}
    H(\boldr):= \begin{cases} 
    \vspace{-0.05in}
            \sum_{k=1}^N\frac{p_k}{r_k} & \text{client availability is }\\
            & \text{positively correlated,} \\
            \sum_{k=1}^N\frac{p^2_k}{r_k} & \text{ otherwise}. \\
                \end{cases}
\end{equation}

It is readily shown that $H(\boldr)$ bounds the variance induced in the global model by the selection policy with rate $\boldr$. For the ease of exposition we postpone that discussion to \cref{sec:defining-h} in favor of first presenting our proposed algorithm.

\feast ({\bf F}ederated {\bf A}veraging {\bf A}ided by an {\bf A}daptive {\bf S}ampling {\bf T}echnique), is presented as Algorithm~\ref{alg:gs_fedavg}. Formally, \feast aims to find a configuration-dependent client sampling strategy $\bm{f_{\text{\vargrad}}}$ such that its long-term client participation rate $\boldr_{\text{\vargrad}}$ approximates the optimal achievable strategy $\boldr^* \in \argmin_{\boldr \in \calR} H(\boldr)$. 
To accomplish this, \feast first initializes $\boldr(0)$ arbitrarily (line 2). At each round $t$, with $\bC_t= C_t$, selecting set $S\in C_t$ implies a contribution to the participation rate of $1$ for every client $k \in S$. Whether or not selecting set $S$ brings $r(t)$ closer to $\boldr^*$ can be computed by estimating the marginal utility of $S$ using the gradient of $H(\boldr)$. Thus, we select $S_t$ (line 5) as 
\begin{equation}
    \label{eq:gradient_sched}
    S_t \in\arg\max_{S \in C_t } -\nabla H(\boldr(t)) \cdot \mathds{1}_S.
\end{equation}
In general, (\ref{eq:gradient_sched}) is a combinatorial optimization problem; in the federated learning systems with a time-varying bound on the number of clients $K_t$ that can be selected ($K_t \ge 0$), (\ref{eq:gradient_sched}) reduces to the discrete optimization problem of greedily selecting $K_t$ available clients with the largest entries of $-\nabla H (\boldr(t))$. Optimality of the greedy approach follows because the objective is an additive set function. 


\begin{figure*}[h]
       \centering
    \includegraphics[width=\textwidth]{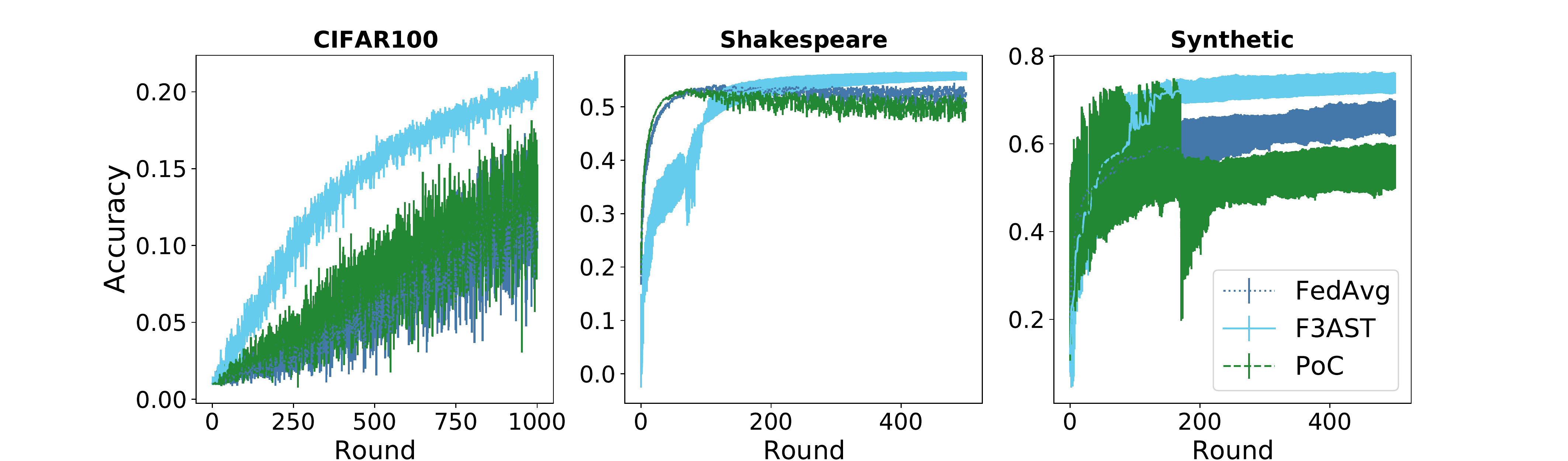}
    \caption{Test per-sample accuracy (averaged over three runs) for different client sampling and aggregation schemes under {\em HomeDevice} availability model. We observe that \vargrad consistently outperforms \fedavg and \poc. Further, \vargrad stabilizes while \fedavg and \poc are unable to adapt to the time-varying environment. }
    \label{fig:stable_convergence}
\end{figure*}

Next, the rate is updated to reflect the selection made in the latest iteration of the sampling scheme. This is done by forming an exponentially smoothed average of the past sampling rates (line 6),
\begin{equation}
\label{eq:update_service_rate}
\boldr(t+1) = (1-\beta)\boldr(t) + \beta \mathds{1}_S,
\end{equation}
where $\beta>0$ is a fixed small parameter. After having selected clients $S_t$, the server broadcasts the current model and the selected clients perform local updates. Finally, the server uses $\boldr(t)$ to produce an unbiased global model $\avgw^{t+1}$ with estimator $\Delta^{t+1}$ (lines 9-10).

\begin{algorithm}
\begin{algorithmic}[1]
\REQUIRE Server parameters: learning rate schedule $\{ \eta_t \}_{t=1}^T$, the number of global rounds $T$, the number of clients per round $K_t$, the number of client local updates $E$
\ENSURE Global model  $\avgw_{T}$
\STATE initialize $\avgw_0 \in \mathbb{R}^p$ arbitrarily,
initialize $\boldr(0)$ arbitrarily
\FOR{$t=1 \to T$}
    \STATE $C_t \quad \gets $ feasible client sets at time $t$ 
    \STATE $S_t \in \argmin_{S \in C_t} \nabla H(\boldr_t)\mathds{1}_S$
    \STATE $\boldr(t) = (1-\beta)\boldr(t-1) + \beta \mathds{1}_S$
    \FOR{Clients $k \in S_t$, in parallel}
        \STATE $\boldv_{k}^{t+1} \gets \textsc{ClientOpt}( \avgw^t, E \text{ steps}, \eta_t )$
    \ENDFOR
    \STATE $\Delta^{t+1} \gets \sum_{k \in S_t} \frac{p_k}{r_k(t)}\boldv_k^{t+1}$
    \STATE $\avgw^{t+1} \gets \textsc{ServerOpt}(\avgw^{t},\Delta^{t+1})$
\ENDFOR
\end{algorithmic}
\caption{\feast: Federated Averaging Aided by an Adaptive Sampling Technique}
\label{alg:gs_fedavg}
\end{algorithm}

\paragraph{Beyond \fedavg\!.} \feast modifies two crucial steps in \fedavg\!: client sampling and model updates aggregation. This makes it suitable to work in combination with other FL algorithms like \textsc{Scaffold} \cite{karimireddy2020scaffold}, AFL \cite{mohri2019agnostic}, \textsc{FedProx} \cite{li2018federated}, \textsc{FedDyn} \cite{acar2021federated}, and more generally \textsc{FedOpt} \cite{reddi2020adaptive}. These methods' theoretical guarantees are provided under the ``all clients are available" assumption, implying that they assume an unrealistic fixed sampling policy which introduces bias to the model. Our proof extends to those settings by modifying accordingly sampling and aggregation to the asymptotically learned $\boldr$ as long as the $\ell_2$-norms of clients' model updates are uniformly bounded -- an assumption already made by the above methods. 

\subsubsection{Asymptotic optimality}

Below we show that as $\beta \downarrow 0$, the selection policy rate converges to the value that optimizes $H(\boldr)$ and, therefore, reduces the model variance. To this end, consider the discrete time Markov process $S^{\beta}(t) = (\boldr^{\beta}(t),\bC_t)$ indexed by the value of $\beta$ defined in Eq. (\ref{eq:update_service_rate}), with $\beta \downarrow 0$ along sequence $\calB = \{ \beta_j\}_{j\in \mathbb{N}}$. $S^{\beta}(0)$ and the probability law of $\bC_t$ describing the availability model are fixed for all $\beta \in \calB$. The speed of convergence is discussed in the appendix. 

\begin{theorem}
\label{thm:sampling_rate_convergence}
Let $\boldr^{\beta}(t)$ be the rate determined by \cref{alg:gs_fedavg}. Let $V \subset \mathbb{R}_{+}^N$ be a bounded set, $\epsilon>0$, and let $\boldr^*$ denote the minimizer of the variance function $H(\boldr)$ over $\calR$. Then for $T>0$, depending on $\epsilon$ and $V$,
\begin{equation*}
   \lim_{\beta \downarrow 0}\quad \sup_{\boldr^{\beta}(0) \in V, t>T/\beta}P [\| \boldr^{\beta}(t) - \boldr^*\|>\epsilon]=0.
\end{equation*}
\end{theorem}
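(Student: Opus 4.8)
The plan is to read \eqref{eq:update_service_rate} as a constant-step-size stochastic approximation recursion and apply the ODE (mean-field) method. Writing the update as $\boldr^{\beta}(t+1) = \boldr^{\beta}(t) + \beta\big(\mathds{1}_{S_t} - \boldr^{\beta}(t)\big)$, the increment is $\beta$ times a uniformly bounded quantity, so $\{\boldr^{\beta}(t)\}$ stays in a fixed bounded region for every $\beta$ and every initial condition in $V$. The driving term $\mathds{1}_{S_t}$ depends on the current rate through the greedy rule \eqref{eq:gradient_sched} and on the configuration chain $\bC_t$. By Assumption~\ref{assum:constraint_process}, $(\bC_t)_t$ is a finite-state irreducible Markov chain with stationary law $\pi$, hence geometrically mixing; averaging $\mathds{1}_{S_t}$ over this stationary law while freezing $\boldr$ yields the mean field $\bar g(\boldr) := \sum_{C\in\calC}\pi(C)\,\mathds{1}_{S^{*}(C,\boldr)}$, where $S^{*}(C,\boldr)\in\arg\max_{S\in C}\,-\nabla H(\boldr)\cdot\mathds{1}_S$. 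The first key observation is that, since $\calR$ is the linear image of the product of per-configuration policy simplices (cf. \eqref{eq:client-rate}), maximizing the linear functional $\boldr'\mapsto -\nabla H(\boldr)\cdot\boldr'$ over $\calR$ decouples across configurations and is solved exactly by the per-configuration greedy rule; thus $\bar g(\boldr)\in\arg\max_{\boldr'\in\calR}\,-\nabla H(\boldr)\cdot\boldr'$. In other words $\bar g$ is the Frank--Wolfe linear-maximization oracle for $H$ over $\calR$, and the mean dynamics is the continuous-time Frank--Wolfe flow $\dot{\boldr}=\bar g(\boldr)-\boldr$.

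Next I would carry out a Lyapunov analysis of this flow using $H$ itself. Each summand $p_k/r_k$ (resp.\ $p_k^2/r_k$) is strictly convex and blows up as $r_k\downarrow 0$, so $H$ is strictly convex on the positive orthant and, together with Lemma~\ref{lemma:constraint_region}, attains a unique minimizer $\boldr^{*}$ interior to the positive orthant (the boundary blow-up keeps the minimizer away from $r_k=0$, where $\nabla H$ is well defined). Along the flow, $\frac{d}{dt}H(\boldr)=\nabla H(\boldr)\cdot(\bar g(\boldr)-\boldr)=\min_{\boldr'\in\calR}\nabla H(\boldr)\cdot\boldr'-\nabla H(\boldr)\cdot\boldr=-g(\boldr)$, where $g(\boldr):=\max_{\boldr'\in\calR}\nabla H(\boldr)\cdot(\boldr-\boldr')\ge 0$ is the Frank--Wolfe gap. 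Hence $H$ is a strict Lyapunov function: it is nonincreasing, and $g(\boldr)=0$ holds precisely when $\nabla H(\boldr)\cdot(\boldr'-\boldr)\ge 0$ for all $\boldr'\in\calR$, the first-order optimality condition, which by convexity characterizes $\boldr^{*}$ alone. This establishes $\boldr^{*}$ as the unique globally asymptotically stable equilibrium of the mean flow on $\calR$.

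Finally I would transfer this stability back to the stochastic iterates via the weak-convergence form of the ODE method (Kushner--Yin). Define the piecewise-constant interpolation $\boldr^{\beta}(\cdot)$ on the rescaled time $s=\beta t$; boundedness of the increments plus the geometric mixing of $\bC_t$ give tightness and identify every weak limit point as a trajectory of the mean flow. Combining weak convergence to these trajectories with the global asymptotic stability of $\boldr^{*}$ and the uniform boundedness of the iterates over $\boldr^{\beta}(0)\in V$ yields exactly the stated conclusion: for any $\epsilon>0$ there is a $T$ such that $\sup_{\boldr^{\beta}(0)\in V,\,t>T/\beta}P[\|\boldr^{\beta}(t)-\boldr^{*}\|>\epsilon]\to 0$ as $\beta\downarrow 0$.

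The main obstacle is that the greedy oracle $S^{*}(C,\boldr)$, and hence $\bar g$, is only piecewise constant in $\boldr$: it jumps when the ranking of the coordinates of $-\nabla H(\boldr)$ changes, so the mean field is discontinuous and the correct limit object is the differential inclusion $\dot{\boldr}\in\overline{\mathrm{co}}\,\bar G(\boldr)-\boldr$ rather than a classical ODE. I would handle this with the stochastic-approximation-for-differential-inclusions framework (Benaïm--Hofbauer--Sorin), noting that the Lyapunov computation above is insensitive to the selection made at the discontinuities: the gap $g(\boldr)$ is a maximum of a continuous function over the compact set $\calR$, hence continuous, and every admissible selection in the inclusion still gives $\frac{d}{dt}H\le -g(\boldr)\le 0$, preserving global asymptotic stability. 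The secondary technical point, averaging the correlated (Markov, not i.i.d.) noise $\bC_t$, is exactly what the geometric ergodicity furnished by Assumption~\ref{assum:constraint_process} controls.
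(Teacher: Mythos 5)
Your proposal is correct and follows essentially the same route as the paper: the paper likewise passes to the fluid/ODE limit of the rescaled process and uses convexity of $H$ to establish attraction of the limiting trajectories to $\boldr^*$ over $\calR$, but it delegates the heavy lifting --- tightness, identification of the fluid sample paths, and the Lyapunov/attraction argument (including the discontinuous greedy drift you rightly flag as requiring a differential-inclusion formulation) --- to Theorems 3 and 4 of \cite{stolyar2005asymptotic}, finishing with a continuous-mapping and restart argument to get the uniformity over $t>T/\beta$ and over initial conditions in $V$. Your explicit identification of the mean field as the Frank--Wolfe linear-maximization oracle over $\calR$ and of $\tfrac{d}{dt}H$ as minus the Frank--Wolfe gap is a clean, self-contained rendering of what that citation supplies.
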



\subsubsection{Bounding the global model variance}
\label{sec:defining-h}

We start by analyzing a fixed arbitrary stochastic policy $\bm{f}^{\boldr}$ achieving rate $\boldr$ and use the result to demonstrate that $H($\boldr$)$ reflects the model variance induced by $\bm{f}^{\boldr}$ (for more details, please see appendix).
Let us introduce several assumptions regarding clients' loss functions
$F_k(\boldw) = \Expec{\xi\sim\calD_k}{f_k(\boldw;\xi)}$, $1 \le k \le N$; these assumptions are commonly encountered in the federated learning literature \cite{li2019convergence, cho2020client}.

\begin{assumption}{[Smoothness and strong convexity]} \label{assum:smooth} $F_1,...,F_N$ are $L-$smooth and $\mu-$strongly convex functions, meaning that
for all $v$ and $w$, $F_k(\boldv) \leq F_k(\boldw) +\nabla F_k(\boldw)^T(\boldv - \boldw) +\frac{L}{2}\|\boldv - \boldw \|_2^2$ and $F_k(\boldv) \geq F_k(\boldw) +\nabla F_k(\boldw)^T(\boldv - \boldw) +\frac{\mu}{2}\|\boldv - \boldw \|_2^2$, respectively.
\end{assumption}

\begin{assumption}{[Bounded variance]}
\label{assum:bound_variance}
Let $\xi$ be a data point that client $k$ samples from distribution $\calD_k$. Then $\mathbb{E}_{\xi\sim \calD_k}[\|\nabla f_k(\boldw, \xi)-\nabla F_k(\boldw) \|_2^2] \leq \sigma_k^2$, $k=1,...,N$
\end{assumption}

\begin{assumption}{[Bounded stochastic gradients]}
\label{assum:bound_gradient_norm}
The expected norm of the stochastic gradients of $f_k$ is uniformly bounded, i.e., $\mathbb{E}_{\xi \sim \calD_k}[\|\nabla f_k(\boldw,\xi) \|_2^2] \leq G^2$, $k=1,...,N$.
\end{assumption}
  
Assumption~\ref{assum:smooth} holds in a number of scenarios of interest, including $\ell_2$-regularized linear and logistic regression, and classification with softmax function. Assumptions~\ref{assum:bound_variance} and \ref{assum:bound_gradient_norm} are common in state-of-the-art distributed learning literature \cite{zhang2013communication,stich2018local, stich2018sparsified, taheri2020quantized}. Note that while we rely on the above assumptions when analyzing the performance, experimental results demonstrate that our sampling techniques work very well in more general settings involving highly non-linear models such as convolutional and recurrent neural networks trained on realistic datasets. 




The following lemma introduces and analyzes $\sigma^2_t(\bm{f}^{\boldr})$, the client sampling variance under sampling policy $\bm{f}^{\boldr}$.

\begin{lemma}
\label{lemma:bound-variance}
Suppose Assumptions~\ref{assum:constraint_process}-\ref{assum:bound_gradient_norm} hold. Let $\boldr \in \calR$ be an achievable long-term client participation rate under the system configuration determined by distribution $\pi$, and $\bm{f}^{\boldr}$ denote a static configuration-dependent selection policy achieving  $\boldr$. Define the client sampling variance $\sigma^2_t(\bm{f}^{\boldr}): = \frac{1}{\eta_t^2}\Expec{S_t}{\|\Delta^{t}-\avgv^t \|^2}$ where $\avgv^t = \sum_{k=1}^Np_k\boldv^k_{t}$ is the update at time $t$ with full client participation. Then
\begin{equation}
    \sigma_t^2(\bm{f}^{\boldr}) \leq 4E^2G^2  (\sum_{k=1}^N\frac{p_k}{r_k} - 1).
\end{equation}
 Furthermore, if client availabilities are uncorrelated or negatively correlated, then there exists a policy $\bm{f}^r$ such that 
\begin{equation}
    \sigma_t^2(\bm{f}^{\boldr}) \leq 4E^2G^2 \left( \sum_{k=1}^N\frac{p_k^2}{r_k} + \sum_{k=1}^N p_k^2 \right).
\end{equation}
\end{lemma}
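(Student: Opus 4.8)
The plan is to reduce the lemma to a covariance-weighted quadratic form in the clients' updates and then bound that form differently in the two regimes. First I would write the per-round error of the importance-sampling estimator as a centered sum: setting $X_k := \mathds{1}[k \in S_t]$ and recalling from \cref{eq:client-rate} that $\Expec{}{X_k} = r_k$ under the stationary configuration law, one has $\Delta^{t} - \avgv^t = \sum_{k=1}^N \frac{p_k}{r_k}(X_k - r_k)\boldv_k^t$, which is mean zero (this is exactly the unbiasedness of the aggregation step). Since the selection $S_t$ is a function only of the configuration $\bC_t$ and the policy's internal randomness, it is independent of the stochastic-gradient noise generating the local updates $\boldv_k^t$. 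Expanding the squared norm and taking expectations therefore factorizes the two sources of randomness, giving
\[
\sigma_t^2(\bm{f}^{\boldr}) = \frac{1}{\eta_t^2}\sum_{k,l}\frac{p_k p_l}{r_k r_l}\,\Sigma_{kl}\,\Expec{}{\langle \boldv_k^t, \boldv_l^t\rangle},
\]
where $\Sigma_{kl} = \mathrm{Cov}(X_k, X_l)$, so in particular $\Sigma_{kk} = r_k(1-r_k)$.

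Second I would bound the updates. Because $\boldv_k^t$ is the accumulation of $E$ local SGD steps with step size $\eta_t$, \cref{assum:bound_gradient_norm} together with Cauchy--Schwarz across the $E$ steps yields $\Expec{}{\|\boldv_k^t\|^2} \le 4\eta_t^2 E^2 G^2$ (a loose constant suffices), and hence $|\Expec{}{\langle \boldv_k^t, \boldv_l^t\rangle}| \le 4\eta_t^2 E^2 G^2$ for all $k,l$ by Cauchy--Schwarz. This is the only place the gradient assumptions enter, and it decouples the geometry of the updates from the combinatorics of the selection.

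For the first (general) bound I would pass to absolute values and discard the signs of the inner products, so that $\sigma_t^2(\bm{f}^{\boldr}) \le 4E^2G^2 \sum_{k,l}\frac{p_kp_l}{r_kr_l}|\Sigma_{kl}|$. The crucial step is $|\Sigma_{kl}| \le \sqrt{\Sigma_{kk}\Sigma_{ll}} = \sqrt{r_k(1-r_k)}\sqrt{r_l(1-r_l)}$, valid for any joint law of the Bernoulli indicators, which makes the double sum separable: $\sum_{k,l}\frac{p_kp_l}{r_kr_l}|\Sigma_{kl}| \le \big(\sum_k p_k\sqrt{(1-r_k)/r_k}\big)^2$. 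A final application of Cauchy--Schwarz with weights $p_k$ and the identity $\sum_k p_k = 1$ collapses this to $\sum_k p_k \frac{1-r_k}{r_k} = \sum_k \frac{p_k}{r_k} - 1$, which yields the claimed general bound and holds for \emph{every} policy.

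For the second bound the active ingredient is the phrase ``there exists a policy'': when client availabilities are uncorrelated or negatively correlated I would construct a selection policy realizing the marginals $\boldr$ whose selection indicators are themselves uncorrelated (product-form selection) or negatively correlated, so that $\Sigma_{kl} \le 0$ for $k \ne l$. The diagonal then contributes $4E^2G^2\sum_k \frac{p_k^2}{r_k}(1-r_k) \le 4E^2G^2\sum_k \frac{p_k^2}{r_k}$, and the off-diagonal terms are controlled using the row-sum identity $\sum_l \Sigma_{kl} = \mathrm{Cov}(X_k, |S_t|)$, which is small when the per-round budget is nearly constant, absorbing the cross terms into the residual $4E^2G^2\sum_k p_k^2$. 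I expect this regime to be the main obstacle. Unlike the first bound, it is an achievability statement: one must exhibit a concrete policy with the prescribed marginals \emph{and} non-positive selection covariance under both the communication constraint $|S_t|\le K_t$ and the unknown availability law, and one must then bound the cross terms with no sign control on the updates (their inner products may be negative, so that negative selection covariance does not automatically reduce the variance). Tying the availability correlation structure to an achievable point of the region $\calR$ from \cref{lemma:constraint_region} is where the real work lies.
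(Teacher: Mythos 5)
Your first bound is correct, and you reach it by a genuinely different route than the paper. The paper never passes through the covariance matrix for this part: it expands $\Expec{}{\|\Delta^t\|^2}=\sum_{i,j}\frac{p_ip_j}{r_ir_j}\Expec{}{\langle\boldv_i^t,\boldv_j^t\rangle}P(i,j\in S)$, bounds the inner products by $4E^2G^2\eta_t^2$ exactly as you do (\cref{lemma:update-norm}), and then uses the elementary joint-inclusion bound $P(i,j\in S)\le P(j\in S)=r_j$ to telescope the double sum down to $\sum_i p_i/r_i$, finally subtracting $\|\avgv^t\|^2$ to produce the $-1$. Your route --- the covariance Cauchy--Schwarz $|\Sigma_{kl}|\le\sqrt{r_k(1-r_k)}\sqrt{r_l(1-r_l)}$, which separates the double sum into $\bigl(\sum_k p_k\sqrt{(1-r_k)/r_k}\bigr)^2$, followed by a weighted Cauchy--Schwarz with $\sum_k p_k=1$ --- is complete, valid for every policy, and arguably cleaner: it produces the $-1$ directly from $\sum_k p_k(1-r_k)/r_k$, whereas the paper's last step obtains it by subtracting $\|\avgv^t\|^2$ from an upper bound, which as written would require $\|\avgv^t\|^2\ge 4E^2G^2\eta_t^2$. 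Your starting decomposition is exactly the paper's \cref{lemma:variance} ($\sigma_t^2=\frac{1}{\eta_t^2}\tr(\bY_t\bY_t^T\bSig)$), so the setups agree.

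For the second bound your setup matches the paper's, but you stop short of a proof and you are right to flag where the difficulty lies. The paper's existence argument instantiates the greedy policy $S_t\in\arg\max_{S\in C_t}-\nabla H(\boldr)\cdot\mathds{1}_S$ and shows, via $P(j\in S\mid i\in S)\le P(j\in S)$ under uncorrelated or negatively correlated availability, that $\Sigma_{ij}\le 0$ for $i\ne j$; it then simply \emph{drops} the off-diagonal terms of $\sum_{i,j}\frac{p_ip_j}{r_ir_j}\langle\boldv_i^t,\boldv_j^t\rangle\Sigma_{ij}$ as ``negative'' and bounds the diagonal by $\sum_i\frac{p_i^2}{r_i}(1-r_i)\le\sum_i\frac{p_i^2}{r_i}+\sum_i p_i^2$ (the residual $\sum_k p_k^2$ is pure slack from this last inequality, not an absorbed cross term as you conjecture). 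Dropping the off-diagonal terms is legitimate only if the products $\langle\boldv_i^t,\boldv_j^t\rangle\Sigma_{ij}$ are nonpositive, i.e., only with sign control on the update inner products --- precisely the objection you raise against your own sketch. So your proposal has a genuine gap here (the row-sum/budget identity is not carried through to the stated constant), but the gap you identify is also present in the paper's own treatment of this step; to close it rigorously one must either argue $\Expec{}{\langle\boldv_i^t,\boldv_j^t\rangle}\ge 0$ or bound $\sum_{i\ne j}\frac{p_ip_j}{r_ir_j}|\Sigma_{ij}|$ by the residual term, neither of which appears in the paper.
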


\begin{theorem}
\label{thm:importance_convergence}
Instate the settings of \cref{lemma:bound-variance}.
Let $\boldw^*$ denote the solution to the optimization problem (\ref{eq:objective_function}), and $L,\mu=O(1)$.  Define $\gamma = \max \{8\frac{L}{\mu}, E\}$, and assume learning rate $\eta_t = \frac{2}{\mu(\gamma+t)}$. 
Then by setting \textsc{ClientOpt} to SGD and \textsc{ServerOpt}$(\avgw^t, \Delta^{t+1}) = \avgw^t+\Delta^{t+1}$, the model $\avgw^T$ produced with policy $\bm{f}^{\boldr}$ after $T$ steps satisfies
\begin{align}
    \label{eq:convergence_bound}
 \nonumber   \mathbb{E}[F(\avgw^T)] &- F^* =O \left( \frac{1}{TE+\gamma} \left(  \|\boldw_1 - \boldw^* \|^2  \right. \right. \\
 & +\sum^N_{k=1}p_k^2\sigma_k^2 + \left.\left. 6\Gamma + 8(E-1)^2G^2 + \sigma_T^2(\bm{f}^{\boldr}) \right)\right),
\end{align}
where $\Gamma = F^* - \sum_{k=1}^Np_kF_k^*$ denotes the local-global objective gap\footnote{Local-global objective gap quantifies data heterogeneity: for i.i.d. data, $\Gamma \to 0$ as the number of samples grows, while a large $\Gamma$ indicates a high degree of heterogeneity \cite{li2019convergence, cho2020client}.}, $F^*$ and $F_k^*$ are the minimum values of $F$ and $F_k$, respectively, and  $\sigma_T(\boldr)$ (\cref{lemma:bound-variance}) captures the variance induced by client sampling.
\end{theorem}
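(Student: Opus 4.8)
The plan is to adapt the virtual-sequence convergence analysis of \fedavg under partial participation (in the spirit of \cite{li2019convergence}), treating the client-sampling error as an \emph{additive, unbiased} perturbation whose second moment is exactly $\eta_t^2\sigma_t^2(\bm{f}^{\boldr})$ and is already controlled by \cref{lemma:bound-variance}. First I would re-index time by local SGD iterations rather than rounds, so that $T$ communication rounds correspond to $TE$ local steps and synchronization occurs only at steps that are multiples of $E$; this is what produces the denominator $TE+\gamma$. I then introduce the virtual full-participation aggregate $\avgv^t=\sum_{k=1}^N p_k\boldw_k^t$, the model obtained by averaging \emph{every} client's local SGD trajectory $\boldw_k^t$ with weights $p_k$. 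Between synchronizations no sampling occurs, so $\avgw^t$ and $\avgv^t$ coincide and evolve identically via local SGD; the two sequences can only differ at aggregation steps.

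Next I would carry out the standard one-step descent estimate for the virtual sequence. Expanding $\|\avgv^{t+1}-\boldw^*\|^2$ around the SGD update and invoking \cref{assum:smooth}, strong convexity yields the contraction factor $(1-\mu\eta_t)$; the stochastic-gradient variance term is bounded by $\sum_{k=1}^N p_k^2\sigma_k^2$ via \cref{assum:bound_variance}; the data-heterogeneity contribution is proportional to $\Gamma$; and the drift of the local iterates accumulated over the at most $E-1$ steps since the last synchronization is bounded by $8(E-1)^2G^2$ through \cref{assum:bound_gradient_norm}. Collecting these gives a recursion $\mathbb{E}\|\avgv^{t+1}-\boldw^*\|^2 \le (1-\mu\eta_t)\,\mathbb{E}\|\avgw^t-\boldw^*\|^2 + \eta_t^2 B$, with $B = \sum_{k=1}^N p_k^2\sigma_k^2 + 6L\Gamma + 8(E-1)^2G^2$ (the factor $L$ being $O(1)$ by hypothesis).

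The sampling enters only at aggregation steps. Because $\Delta^{t+1}=\sum_{k\in S_t}\frac{p_k}{r_k}\boldv_k^{t+1}$ uses the importance weights $p_k/r_k$ and $r_k$ is precisely the marginal participation rate $\mathbb{E}[\mathds{1}_{k\in S_t}]$, the estimator is unbiased: $\mathbb{E}_{S_t}[\avgw^{t+1}]=\avgv^{t+1}$. Hence the cross term vanishes and the bias--variance decomposition gives $\mathbb{E}\|\avgw^{t+1}-\boldw^*\|^2 = \mathbb{E}\|\avgv^{t+1}-\boldw^*\|^2 + \mathbb{E}\|\avgw^{t+1}-\avgv^{t+1}\|^2$, where by the definition in \cref{lemma:bound-variance} the last term equals $\eta_t^2\sigma_t^2(\bm{f}^{\boldr})$ and is bounded there by $4E^2G^2(\sum_k p_k/r_k-1)$ in general and by the sharper uncorrelated/negatively-correlated expression otherwise. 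Combining with the virtual-sequence recursion yields the single master inequality $\mathbb{E}\|\avgw^{t+1}-\boldw^*\|^2 \le (1-\mu\eta_t)\,\mathbb{E}\|\avgw^t-\boldw^*\|^2 + \eta_t^2\big(B+\sigma_t^2(\bm{f}^{\boldr})\big)$.

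Finally I would solve this recursion. With $\eta_t=\frac{2}{\mu(\gamma+t)}$ and $\gamma=\max\{8L/\mu,E\}$, a routine induction (choosing the induction constant to dominate $\|\boldw_1-\boldw^*\|^2$ and $\frac{4}{\mu^2}(B+\sigma_T^2(\bm{f}^{\boldr}))$) gives $\mathbb{E}\|\avgw^t-\boldw^*\|^2 = O\!\big(\tfrac{1}{\gamma+t}\big)$; evaluating at $t=TE$ and applying $L$-smoothness, $\mathbb{E}[F(\avgw^T)]-F^* \le \tfrac{L}{2}\,\mathbb{E}\|\avgw^T-\boldw^*\|^2$, produces the stated bound. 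I expect the main obstacle to be the bookkeeping of the local-drift term under intermittent participation: one must verify that the $8(E-1)^2G^2$ bound on how far the local iterates wander between synchronizations remains valid when only a sampled subset is aggregated, and that unbiasedness alone suffices to keep the sampling noise orthogonal to the contraction so it accumulates as a single $\sigma_T^2(\bm{f}^{\boldr})$ term rather than degrading the rate. The remainder is a faithful adaptation of the full-participation argument, with the client-sampling variance quantified by \cref{lemma:bound-variance} as the one extra additive term.
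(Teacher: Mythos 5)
Your proposal is correct and follows essentially the same route as the paper's proof: decompose $\|\avgw^{t+1}-\boldw^*\|^2$ around the full-participation virtual iterate, kill the cross term via the unbiasedness of $\Delta^{t+1}$ (the paper's Lemma C.1), bound the full-participation part by the Li et al.\ recursion with $\mathrm{Var}_1=\sum_k p_k^2\sigma_k^2+6L\Gamma+8(E-1)^2G^2$ and the sampling part by \cref{lemma:bound-variance}, then close with induction on $\mathbb{E}\|\avgw^{t}-\boldw^*\|^2\le v/(\gamma+tE)$ and $L$-smoothness. The only differences are cosmetic (you average local models rather than updates, and write $\mathbb{E}_{S_t}[\avgw^{t+1}]=\avgv^{t+1}$ where the paper's reference point is $\avgz^{t+1}=\avgw^t+\avgv^{t+1}$).
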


\begin{remark}
\normalfont Proof of \cref{thm:importance_convergence} follows the line of argument similar to that in the analysis of FL algorithms convergence \cite{stich2018local, li2019convergence, reddi2020adaptive}. The first term in the parenthesis in eq.~(\ref{eq:convergence_bound}) captures the effect of initialization, while the second term reflects the variance of stochastic gradients. The remaining terms are tied to the inherent challenges of data heterogeneity in FL. 

 In summary, our presented technical contributions include: (i) developing an arbitrary selection policy with feasible long-term client participation rate $\boldr$ (meanwhile, the result of \citet{li2019convergence} applies only to two specific - potentially unfeasible in real settings - sampling schemes); (ii) showing the global update is unbiased under this selection policy and aggregation step; and (iii) computing the incurred variance captured by the last term $\sigma_T^2(\boldr)$ and bounded in \cref{lemma:bound-variance}.  
\end{remark}
\vspace{-0.1in}
\paragraph{From static to dynamic policies.} Aiming to circumvent the requirement for having access to (generally unavailable) system configuration information that applies to static selection policies, we proceed by combining expressions (6) and (7) (ignoring constant terms) to define
\begin{equation*}
    H(\boldr):= \begin{cases} 
    \vspace{-0.05in}
            \sum_{k=1}^N\frac{p_k}{r_k} & \text{client availability is }\\
            & \text{positively correlated,} \\
            \sum_{k=1}^N\frac{p^2_k}{r_k} & \text{ otherwise.} \\
                \end{cases}
\end{equation*}
Clearly, minimizing $H(\boldr)$ over $\boldr$ reduces the upper bound on the model variance
stated in \cref{lemma:bound-variance}.
It follows from \cref{eq:client-rate} that $r_k \leq 1$, and thus it can readily be shown that $r_k = 1$ for all $k$ minimizes $H(\boldr)$ over $[0,1]^N$. However, it is possible that for a given configuration of the system this long-term client participation rate is not achievable, i.e., $\bm{1} \notin \calR$ (the set of feasible $\boldr$'s defined in \cref{lemma:constraint_region}). Recalling the example in \cref{sec:intro},
 minimization of $H(\boldr)$ over $\calR$ is also difficult because it requires knowledge of the achievable long-term client participation rate region $\calR$, determined by the distribution $\pi$ defined in Assumption~\ref{assum:constraint_process}. Finally, even if $\calR$ is known, the resulting policy $\bm{f}^{\boldr}$ will likely be client and set dependent, thus rendering the problem challenging due to an exponential number of variables and unknown parameters. 
These are precisely the obstacles that \feast overcomes by learning a selection policy which is asymptotically optimal in terms of minimizing $H(\boldr)$, and guaranteeing convergence to a long-term participation rate minimizing $H(\boldr)$ over $\calR$. 

\begin{table*}[h]
\caption{Test sample accuracy of all methods, and relative improvements of \vargrad over \fedavg and \vargrad+ Adam over \fedadam\!\!, for different availability models (columns) on CIFAR100 (1000 rounds) and Shakespeare (500 rounds).}
\label{tab:accuracy}
\centering
\begin{tabular}{@{}lllllll@{}}
\toprule
~ & ~ & \multicolumn{5}{c}{Availability models} \\ 
~ & ~      &    Always       &      Scarce     &      HomeDevice     &      Uneven     &      SmartPhones \\
\midrule
 \multirow{5}{*}{CIFAR100} & \textcolor{cb-blue}{\fedavg}     &             0.141                              &     0.096     &     0.111     &     0.072     &     0.142 \\
& \feast      &             0.198 \textcolor{cb-blue}{(+40\%)}    &     0.201   \textcolor{cb-blue}{(+109\%)}  &     0.208  \textcolor{cb-blue}{(+87\%)}   &     0.206 \textcolor{cb-blue}{(+186\%)}    &     0.201 \textcolor{cb-blue}{(+42\%)}\\
\clineThicknessColor{2-7}{0.3pt}{lightgray}
& \textcolor{cb-lilac}{\fedadam}    &             0.262                              &     0.288     &     0.302     &     \textbf{0.282}     &     0.298 \\
& \feast+ Adam &            \textbf{0.271}  \textcolor{cb-lilac}{(+3\%)}          &     \textbf{0.308}   \textcolor{cb-lilac}{(+7\%)}  &     \textbf{0.324}   \textcolor{cb-lilac}{(+7\%)}  &     \textbf{0.281} \textcolor{gray}{(0\%)}    &     \textbf{0.320} \textcolor{cb-lilac}{(+7\%)} \\\clineThicknessColor{2-7}{0.3pt}{lightgray}
& \poc &             0.101     &     0.115     &     0.139     &     0.111     &     0.069 \\ \midrule
 \multirow{5}{*}{Shakespeare}& \textcolor{cb-blue}{\fedavg}  &             0.54     &     0.549     &     0.522     &     0.540     &     0.538 \\
& \feast &             \textbf{0.569} \textcolor{cb-blue}{(+5\%)}     &     \textbf{0.555}  \textcolor{cb-blue}{(+1\%)}    &     \textbf{0.568} \textcolor{cb-blue}{(+9\%)}     &     0.556 \textcolor{cb-blue}{(+3\%)}     &     0.570 \textcolor{cb-blue}{(+6\%)}  \\\clineThicknessColor{2-7}{0.3pt}{lightgray}
& \textcolor{cb-lilac}{\fedadam} &             0.536     &     0.541     &     0.520     &     \textbf{0.557}     &     0.520 \\
& \feast + Adam &             0.549  \textcolor{cb-lilac}{(+2\%)}    &     0.551   \textcolor{cb-lilac}{(+2\%)}   &     0.566  \textcolor{cb-lilac}{(+9\%)}     &     \textbf{0.557} \textcolor{gray}{(0\%)}     &     0.551 \textcolor{cb-lilac}{(+6\%)} \\\clineThicknessColor{2-7}{0.3pt}{lightgray}
& \poc &             0.554     &    \textbf{0.555}     &     0.496     &     0.555     &     0.535 \\
\bottomrule
\end{tabular}
\end{table*}
\begin{table*}[h]
\caption{Sample loss of all methods, and relative improvements of \vargrad over \fedavg and \vargrad+ Adam over \fedadam\!\!, for different availability models (columns) on CIFAR100 (1000 rounds) and Shakespeare (500 rounds).}
\label{tab:loss}
\centering
\begin{tabular}{@{}lllllll@{}}
\toprule
~ & ~ & \multicolumn{5}{c}{Availability model} \\ 
~ & ~      &    Always       &      Scarce     &      HomeDevice     &      Uneven     &      SmartPhones \\
\midrule
 \multirow{5}{*}{CIFAR100} & \textcolor{cb-blue}{\fedavg}  &             4.42     &     4.77     &     4.74     &     5.29     &     4.28 \\
 & \feast &             4.20 \textcolor{cb-blue}{(-5\%)}     &     4.18  \textcolor{cb-blue}{(-12\%)}   &     4.14 \textcolor{cb-blue}{(-13\%)}    &     4.17 \textcolor{cb-blue}{(-21\%)}     &     4.15 \textcolor{cb-blue}{(-3\%)} \\ \clineThicknessColor{2-7}{0.3pt}{lightgray}
& \textcolor{cb-lilac}{\fedadam} &             3.74     &     3.65     &     3.50     &     3.67     &     3.55 \\
& \feast+ Adam &             \textbf{3.69}  \textcolor{cb-lilac}{(-1\%)}   &     \textbf{3.61}  \textcolor{cb-lilac}{(-1\%)}   &     \textbf{3.45}   \textcolor{cb-lilac}{(-1\%)}  &     \textbf{3.66}   \textcolor{cb-lilac}{(-0.5\%)}  &     \textbf{3.46} \textcolor{cb-lilac}{(-2\%)} \\ \clineThicknessColor{2-7}{0.3pt}{lightgray}
& \poc &             4.90     &     4.84     &     4.63     &     4.86     &     5.42 \\
\midrule
 \multirow{5}{*}{Shakespeare} & \textcolor{cb-blue}{\fedavg}  &             1.24     &     1.15     &     1.36     &     1.17     &     1.22 \\
 & \feast &             \textbf{1.10} \textcolor{cb-blue}{(-11\%)}    &     \textbf{1.13} \textcolor{cb-blue}{(-1\%)}    &     \textbf{1.10}  \textcolor{cb-blue}{(-19\%)}   &    1.13 \textcolor{cb-blue}{(-3\%)}  &     \textbf{1.10} \textcolor{cb-blue}{(-10\%)}\\ \clineThicknessColor{2-7}{0.3pt}{lightgray}
 & \textcolor{cb-lilac}{\fedadam} &             1.27     &     1.23     &     1.40     &     1.12     &     1.36 \\
 & \feast+ Adam &             1.18 \textcolor{cb-lilac}{(-8\%)}    &     1.19  \textcolor{cb-lilac}{(-3\%)}   &     1.11  \textcolor{cb-lilac}{(-21\%)}   &    \textbf{ 1.11 }  \textcolor{cb-lilac}{(-1\%)}  &     1.18 \textcolor{cb-lilac}{(-13\%)} \\ \clineThicknessColor{2-7}{0.3pt}{lightgray}
 & \poc &             1.13     &     \textbf{1.13}     &     1.74     &     1.13     &     1.24 \\
\bottomrule
\end{tabular}
\end{table*}
\vspace{-0.1in}
\section{Experiments}
\label{sec:experiments}
\subsection{Datasets and models}

\label{sec:experiments_description}
We test our model on three well-known federated datasets. First, a synthetic heterogenous dataset Synthetic(1,1) for softmax regression, introduced in \cite{shamir2014communication} and widely used in the FL community \cite{cho2020client,li2018federated, li2019convergence}. Second, a recurrent neural network with 1M parameters for the next character prediction task on the Shakespeare dataset \citep{mcmahan17a}, a language modelling dataset with 725 clients, each one a different speaking role in each play from the collective works of William Shakespeare.  Third, CIFAR100 with the partition introduced in \cite{reddi2020adaptive}, utilizing Latent Dirichlet Allocation in order to generate a realistic heterogenous distribution. We train ResNet-18, replacing batch with group normalization, a modification that has shown improvements in federated settings \citep{hsieh2020non}. Our code is available on Github\footnote{\url{https://github.com/mriberodiaz/f3ast}}

\vspace{-0.1in} 
\paragraph{Availability models.}
We perform tests on five realistic availability models described below. To our knowledge, there exist no public databases with real availability patterns; Smartphone's model \cite{bonawitz2019towards} is inspired by realistic data. All models are motivated by practical federated learning systems:


\begin{enumerate}
\item
\textit{Always}: Baseline model, clients are always available. 
\vspace{-0.1in}
\item \textit{Scarce}: Independent and homogeneous availability across clients and time with probability $q=0.2$.
\vspace{-0.1in}
\item \textit{Home-devices}: Independent availability across clients and time with probability $q_k = T_k/B$, where $T_k\sim \mbox{\rm lognormal}$ and $B = \max_k T_k$.
\vspace{-0.1in}
\item \textit{Smartphones}: Sine-modulated {\em Home-devices} model, $q_{k,t}=f_tq_k$, where $q_k$ is defined in the \textit{Home-devices} model and $f_t$ denotes a sinusoidal time-dependent availability (see \cite{bonawitz2019towards}).
\vspace{-0.1in}
\item \textit{Uneven:} Each client's availability is inversely proportional to its dataset size, $q_k \propto 1/p_k$. 
\end{enumerate}

\vspace{-0.1in}

We split each client's dataset into training and validation sets. We assume the distribution $\calP$ over users is determined by the fraction of data they possess. In the following, we first fix the communication constraint to select $M=10$ clients in each round and compare different methods across availability models. We then proceed by exploring the effect of varying $M$. Further details on the data sets, model architectures, availability modeling, and hyperparameters can be found in Appendix~\ref{sec:app:architectures}. We implement our models using the Tensorflow-Federated API \citep{tff}. 

\vspace{-0.1in}

\paragraph{Baselines.}

First, we compare our algorithm with two availability-agnostic methods: (i) \fedavg\!, a standard baseline, and (ii) \fedadam\!\!, which achieves state-of the-art performance in the considered benchmark tasks \cite{reddi2020adaptive}. Both methods sample available clients with normalized probabilities $p_k$, but \fedavg uses SGD as the server optimizer while \fedadam uses Adam. For a fair comparison, we implement both methods and compare with their availability-aware versions wherein we incorporate our proposed sampling and aggregation step.

Second, we test against a state-of-the-art algorithm, Power-of-Choice (\poc\!) \citep{cho2020client}, a method that, although agnostic to the availability model, can work in conjuction with client unavailability. In \poc, the server at time $t$ samples $d$ clients from the available set $C_t$ without replacement, choosing client $k$ with probability $p_k$. The $d$ clients receive the current model $\boldw_t$ and inform the server about their current loss $F_k(\avgw^t)$. The server then selects for training the top $M$ clients with the highest loss.

We do not compare our method with stateful techniques (\textsc{Scaffold}, \textsc{FedDyn}) since they are not applicable in the cross-device federated settings \cite{kairouz2019advances}. 

To evaluate performance of the algorithms, we compute the loss and accuracy using per-test-sample averages (the average is taken over individual data points). Per-user average results can be found in the appendix.
\vspace{-0.1in}
\subsection{Numerical results.}
\label{sec:experiments-availability-models}

We first show the convergence of \vargrad on the three datasets with {\em Home-devices} availability model, a setting that fits Assumption~\ref{assum:constraint_process} and is realistic in FL (note that the synthetic dataset satisfies all assumptions from Section~3). Corroborating expectations of the impact of the de-biasing step introduced by \feast\!\!, \cref{fig:stable_convergence} shows that \vargrad achieves higher accuracy than \fedavg and \poc on all datasets (the corresponding loss plot can be found in \cref{sec:app_experiments}).  Moreover, after the first 100 iterations, \feast stabilizes on Shakespeare and Synthetic datasets, and follows a more stable learning trajectory, illustrating its variance reduction advantage, unlike the two baselines that have high variability due to time-varying client availability. The sharp drop in Shakespeare is caused by sampling a client misaligned due to the heterogenous nature of the data; this has been reported in \cite{charles2021large}. We show an average over three runs in \cref{sec:app_experiments}.
We observe a similar behaviour on CIFAR100: \vargrad achieves almost a 200\% improvement in the average accuracy over the last 100 rounds, and has a much more stable behaviour. The stagnation of \fedavg and \poc at higher loss models confirms that naive averaging introduces bias to the model, hindering convergence.

\cref{tab:accuracy} shows the final accuracy of algorithms under diverse availability models defined in Section~\ref{sec:experiments_description} (1000 rounds on CIFAR100 and 500 rounds on Shakespeare \footnote{Higher accuracy values on CIFAR100 could be obtained by running experiments for 10,000+ rounds}). \vargrad effectively improves the accuracy of \fedavg over both datasets and for all availability models. It also improves \fedadam for all but the {\em Uneven} model where the accuracy remains the same although the loss value is lower (\cref{tab:loss}). \vargrad is particularly successful in difficult settings, e.g. {\em Scarce} and {\em Uneven}, where a small number of clients is available for training and where the client availability is inversely proportional to the amount of data clients hold -- there, \vargrad is able to maintain performance similar to the setting where all clients are available. Meanwhile, the performance of both \fedavg and \poc deteriorates. \fedadam is able to maintain performance in the {\em Uneven} model where momentum may help with biased updates, but deteriorates in all other settings. 

\vspace{-0.1in}
\section{Conclusion}

We presented \feast, an algorithm for learning in federated systems that operate under communication constraints and service intermittently available clients. We demonstrated that the algorithm achieves accuracy superior to state-of-the-art federated learning techniques, and exhibits resilience in challenging system settings. Future work includes studies of the setting where the clients are grouped in clusters/classes, and exploring a wider range of communication constraints. 

\section*{Acknowledgements}
This material is based upon work supported by the National Science Foundation under grant no. 2148224 and is supported in part by funds from OUSD R\&E, NIST, and industry partners as specified in the Resilient \& Intelligent NextG Systems (RINGS) program.

\newpage

\bibliography{references}
\bibliographystyle{plainnat}

\newpage
\appendix
\onecolumn

\section*{\center{Supplementary Material for \textit{``Federated Learning Under Time-Varying Communication Constraints and Intermittent Client Availability"}}} 

\vspace{0.5cm}

This document is organized as follows. Section~A introduces notation and states useful definitions. Section~B presents discussion of \feast extensions and problem variations.
Section~C provides proofs of theorems and lemmas that, for the sake of brevity, have been omitted from the main document. Finally, details of the experiments are given in Section~D.

\section{Notation and Definitions}
\label{sec:app:notation}

For clarity, frequently used symbols are summarized in Table~1 below.

\begin{table*}[ht]
\centering
\begin{tabular}{ll}
\hline
Symbol                                             & Definition                                                         \\ \hline
$\calU$                                                  & Set of all users \\
$N = |\calU|$                                                    & Total number of clients\\
$K_t$                                                    & Number of clients sampled at round $t$ \\
$T$                                                    & Total number of rounds        \\
$m_t$                                           & Number of available clients at time $t$ \\
$(\bA_t)_t$                                           & availability stochastic process \\
$(\bC_t)_t$                                           & feasible client sets stochastic process\\
$S_t$                                           & Set of clients participating at round $t$ \\
$\pi(\cdot)$                                           & Stationary distribution of availability and communication constraint process. \\
$\boldr$                                           & Client sampling rate \\
$\bm{f}^{\boldr}$                               & Configuration-dependent client sampling policy \\
$\Delta^{t+1}$                                   & Pseudo-gradient for server optimizer: aggregates updates of participating clients at round $t$    \\
$\avgw^t$                                       & Global model at beginning of round $t$                                          \\
$\boldw_k^{t+1}$                                 & Local model at the end of round $t$ at client $k$\\ 
$\boldv_k^{t+1}$                                 & Local update at the end of round $t$ at client $k$\\ 
$\avgv^{t+1} = \Expec{k\sim \calP}{\boldv^k_{t+1}} = \sum_{k=1}^Np_k\boldv^k_{t+1}$                                 & Expected global update at the end of round $t$ under desired distripution $\calP$\\ 
$\avgz^{t+1} = \avgw^t+\avgv^{t+1}  $                             & Desired global model at the end of round $t$ \\ 
 \bottomrule
\end{tabular}
\caption{Frequently used symbols.}
\label{tab:notation}
\end{table*}


The generalized \fedavg assumes the server sends to clients in $S_t$ at time $t$ an initial model $\avgw^t$. For $t=0,...,T$; the clients locally initialize $\boldw_k^{(t,0)} \gets \avgw^{t-1}$ and take $E$ steps of SGD producing the sequence $(\boldw_k^{(t,i)})_{i=0}^E$. Formally, let $\xi^{(t,i)}_k$ be the mini-batch for client $k$ at time $i$ in round $t$; for each client $k$, we can then define the local model $\boldw^{(t,i)}_k$ and local update $\boldv_k^{t}$ as

\begin{equation*}
    \boldw^{(t+1,i)}_k =  \begin{cases}
     \avgw^{t} & i=0 \\
    \boldw^{(t+1,i-1)}_k-\eta_{t+1,i} \nabla F_k(\boldw^{(t+1,i-1)}_k, \xi^{(t+1,i)}_k), & i=1,...,E,
    \end{cases} \hspace{1cm}  \boldv^{t+1}_k = \boldw_k^{(t+1,E)} - \avgw^{t}.
\end{equation*}

Here $\boldw_{k}^{(t,i)}$ tracks local models of client $k$ at round $t$ and iteration $i$, and $\boldv_k^{t}$ is the local update of client $k$ at the end of round $t$. Following distributed optimization standard techniques \cite{stich2018local, li2019convergence}, we define the sequences

\begin{equation}
    \label{eq:average_model}
    \overline{\boldv}^{t+1} = \sum_{k=1}^Np_k\boldv^{t+1}_k, \hspace{0.5cm}  \Delta^{t+1}= \sum_{i \in S_t} \frac{p_i}{r_i}\boldv_k^{t+1}, \hspace{0.5cm}  \avgw^{t+1} = \avgw^{t}+\Delta^{t+1}, \hspace{0.5cm}  \avgz^{t+1} = \avgw^t+\avgv^{t+1}.
\end{equation}


\section{\feast Extensions and Problem Variations.}

\paragraph{Rapid mixing time. } Convergence rate of $\boldr(t) \to \boldr^*$ depends on the properties of the Markov chain specified by the availability process. Concretely, we have the following known theorem (see, e.g., \cite{LP17} for a proof and details).

\begin{theorem}{[Convergence Theorem]}
\label{thm:mixing-time}
Let $P$ be the transition matrix of a system configuration satisfying \cref{assum:constraint_process} with stationary distribution $\pi$. Then there exists a constant $\alpha \in (0,1)$ and $C>0$ such that 

$$\max_{C \in \calC} \|P^t(C,\cdot) - \pi \|_{TV}\leq C\alpha^t.$$

\end{theorem}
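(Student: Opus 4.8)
The plan is to give the standard convergence argument for finite, irreducible, aperiodic Markov chains via a Dobrushin/Doeblin contraction in total variation. A preliminary remark is in order: \cref{assum:constraint_process} only posits irreducibility, whereas TV-convergence additionally requires aperiodicity (a periodic chain has $P^t(C,\cdot)$ oscillating and never settling on $\pi$). In the settings of interest this is automatic, since the configuration process can persist in the same state across consecutive rounds, i.e. $P(C,C)>0$ for some $C$, which forces aperiodicity. I would therefore state up front that the chain is finite, irreducible and aperiodic, so that it is \emph{primitive}: there exists $m\ge 1$ with $P^m(C,C')>0$ for \emph{all} pairs $C,C'\in\calC$.

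The main step is to turn primitivity into a uniform one-step contraction. Set $\delta := \min_{C,C'\in\calC} P^m(C,C')>0$, which is strictly positive because $\calC$ is finite and every entry of $P^m$ is positive. For any two probability vectors $\mu,\nu$ on $\calC$ I would invoke the Dobrushin coefficient bound $\|\mu P^m - \nu P^m\|_{TV}\le \tau(P^m)\,\|\mu-\nu\|_{TV}$, where $\tau(P^m)=1-\min_{C,C'}\sum_{C''}\min\{P^m(C,C''),P^m(C',C'')\}$. Since the overlap mass of any two rows of $P^m$ is at least $\delta$ (each row dominates the common floor $\delta$ on every coordinate), we obtain $\tau(P^m)\le 1-\delta=:\rho<1$.

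To finish, I would iterate this contraction using stationarity. Because $\pi P^m=\pi$, taking $\mu=P^s(C,\cdot)$ and $\nu=\pi$ and writing $t=qm+s$ with $0\le s<m$ gives $\|P^t(C,\cdot)-\pi\|_{TV}\le \rho^{q}\,\|P^s(C,\cdot)-\pi\|_{TV}\le \rho^{\lfloor t/m\rfloor}$, where the last bound uses that total variation distance is at most $1$. Rewriting $\rho^{\lfloor t/m\rfloor}\le \rho^{-1}(\rho^{1/m})^{t}$ and setting $\alpha:=\rho^{1/m}\in(0,1)$ and $C:=\rho^{-1}$ yields the claimed bound uniformly over the (finite) initial states $C\in\calC$.

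The only genuine obstacle is establishing primitivity, i.e. strict positivity of some fixed power $P^m$; everything downstream is a routine contraction estimate. This is exactly where aperiodicity is indispensable, which is why I would flag the implicit aperiodicity hypothesis and justify it from the availability model. An alternative route is spectral: by Perron--Frobenius the eigenvalue $1$ is simple and all other eigenvalues of a primitive stochastic $P$ lie strictly inside the unit disk, so $\alpha$ can be taken as the second-largest eigenvalue modulus (the spectral gap); I prefer the minorization argument here because it avoids diagonalizability/Jordan-block technicalities and delivers the explicit constants $\alpha=\rho^{1/m}$ and $C=\rho^{-1}$ directly. Since the statement is classical, I would ultimately just cite \cite{LP17} for the details and present the above as the underlying mechanism.
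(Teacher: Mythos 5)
Your proof is correct and is essentially the classical argument the paper itself relies on: the paper offers no proof of this statement, deferring entirely to \cite{LP17}, whose Convergence Theorem rests on exactly the minorization-of-$P^m$ idea you use (there phrased as $P^m=(1-\theta)\Pi+\theta Q$ with $\Pi$ the rank-one matrix of stationary rows, rather than via the Dobrushin coefficient, but the mechanism and the resulting constants $\alpha=\rho^{1/m}$, $C=\rho^{-1}$ are the same). Your observation that \cref{assum:constraint_process} only guarantees irreducibility while the stated total-variation convergence also requires aperiodicity is a genuine and worthwhile catch --- as written the bound fails for a periodic chain (e.g., a deterministic two-state alternation), so your justification via a self-loop $P(C,C)>0$, or an explicit aperiodicity hypothesis, is needed to make the theorem correct as stated.
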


The above result shows that in practice one needs $t_0 \geq \frac{\log \epsilon}{\log \alpha}$ iterations to achieve a stationary rate $\boldr$ up to an error $\epsilon$ to the stationary distribution. Further, even if the rate is not constant during the early iterations, this result demonstrates that after a burn-in the rate will stabilize and the asymptotic convergence rate of $O(1/TE)$ will not be affected.

\paragraph{Working with clusters of clients. } Note that in some settings the number of clients is rather large and thus tracking $p_k$ could be difficult. In such scenarios, it may be meaningful to interpret $p_k$, $k=1,...,N$, as a partition of data over clusters of clients, and the objective function in (\ref{eq:objective_function}) as a weighted average over the clusters. Tracking $p_k$ can then be cast as the estimation of time-varying class sizes, which is feasible through efficient protocols \cite{alouf2002optimal}.

\paragraph{More general sampling constraints. } 
Our model is quite general in that it can capture different client sampling requirements, e.g., forcing the selection of as many clients as available up to $M$ or other minimal sampling requirements on groups of users. 

\paragraph{Societal impact.} Our work is motivated by model fairness in the sense that clients are represented in the global model proportionally to their dataset size. Notice that our framework adapts to different fairness metrics by replacing distribution $\calP$. This is out of the scope of our paper, and we leave it for future research. 

\paragraph{Fixed-policy \feast\!\!.} For the sake of completeness, we here formalize as \cref{alg:importance_fedavg} the procedure that can determine the fixed policy analyzed in Section~3.2.

\begin{algorithm}
\begin{algorithmic}[1]
\REQUIRE Server parameters: learning rate schedule $\{\eta_{(t,i)}\}_{_{(t,i)}} $, the number of global rounds $T$, targeted long-term rate $\boldr \in \calR$ and static configuration dependent sampling policy $\bm{f}^{\boldr}$ that achieves $\boldr$
\ENSURE Global model  $\avgw^{T}$
\STATE initialize $\avgw^0$
\FOR{$t=1 \to T$}
    \STATE$C_t \quad \gets $ available collection of possible client subsets at time $t$
    \STATE Use policy $\bm{f^r}$ to select $S_t\in C_t$
    \FOR{Clients $k \in S_t$, in parallel}
        \STATE $\boldv_{k}^{t+1} \gets \textsc{ClientOpt}\left( \avgw^t, E \text{ steps}, \{\eta_{(t,i)}\}_{_{(t,i)}} \right)$
    \ENDFOR
    \STATE $\Delta^{t+1}= \sum_{k \in S_t} \frac{p_k}{r_k}\boldv^{t+1}_k$
    \STATE $\avgw^{t+1} \gets \textsc{ServerOpt}(\avgw^{t},\Delta^{t+1})$
\ENDFOR
\end{algorithmic}
\caption{Fixed-policy \feast\!\!. }
\label{alg:importance_fedavg}
\end{algorithm}

\section{Proofs}
\label{sec:app:proofs}

 This section provides proofs of the lemmas and theorems omitted from the main document.

\subsection{Proof of Theorem~\ref{thm:importance_convergence}}

Proof of \cref{thm:importance_convergence} follows standard optimization proofs \cite{stich2018local, li2019convergence, reddi2020adaptive}. Our technical contribution comes from using a sampling policy with arbitrary sampling rate $\boldr$, showing global update $\Delta^{t+1}$ is unbiased (\cref{lemma:unbiased}), and computing the incurred variance of such sampling policy and aggregation step (\cref{lemma:bound-variance}). This last step is of particular interest and challenging due to the unknown system configuration, and the importance sampling multiplicative terms. 

\begin{lemma}[Unbiased update]
\label{lemma:unbiased}
Suppose Assumption~\ref{assum:constraint_process} holds. Let $\boldr \in \calR$ be an achievable sampling rate, and $\bm{f}^r$ be a state-dependent static policy achieving rate $\boldr$. Fix $\avgw^{t}\in \mathbb{R}^p$. Let $(\boldv_k^{t+1})_{k=1}^N$ denote updates of clients starting from model $\avgw^{t}$. Let $\avgv^{t+1} =\sum_{k=1}^Np_k\boldv_k^{t+1}$, let $S$ be the client set selected by policy $\bm{f}^r$ at time $t$ and $\Delta^{t+1} = \sum_{k \in S}\frac{p_k}{r_k}\boldv_k^{t+1}$. Then $\mathbb{E}_S[\Delta^{t+1}] = \avgv^{t+1}$.
\end{lemma}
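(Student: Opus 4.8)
The plan is to compute $\mathbb{E}_S[\Delta^{t+1}]$ directly by exploiting two facts: that the local updates $(\boldv_k^{t+1})_{k=1}^N$ are deterministic once $\avgw^t$ is fixed, so the only randomness in $\Delta^{t+1}$ comes from the client set $S$ chosen by $\bm{f}^r$; and that under \cref{assum:constraint_process} the configuration $\bC_t$ drawn at round $t$ has marginal equal to the stationary distribution $\pi$. First I would rewrite the sum over the selected set as a sum over all clients weighted by membership indicators,
$$\Delta^{t+1} = \sum_{k \in S} \frac{p_k}{r_k}\boldv_k^{t+1} = \sum_{k=1}^N \frac{p_k}{r_k}\boldv_k^{t+1}\,\mathds{1}\{k \in S\},$$
and then apply linearity of expectation, pulling the fixed vectors $\boldv_k^{t+1}$ outside to obtain $\mathbb{E}_S[\Delta^{t+1}] = \sum_{k=1}^N \frac{p_k}{r_k}\boldv_k^{t+1}\,P(k \in S)$.

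The crux is therefore the identity $P(k \in S) = r_k$. To establish it I would condition on the configuration and unfold the two-stage sampling: the configuration $\bC_t = C$ occurs with probability $\pi(C)$, and given $C$ the policy selects $S \in C$ with probability $f_{C,S}$. Hence
$$P(k \in S) = \sum_{C \in \calC}\pi(C)\sum_{S \in C} f_{C,S}\,(\mathds{1}_S)_k,$$
which is exactly the $k$-th coordinate of the long-term participation-rate vector defined in \cref{eq:client-rate}; since $\bm{f}^r$ achieves rate $\boldr$, this coordinate equals $r_k$. Substituting $P(k \in S) = r_k$ cancels the importance weight $p_k/r_k$ down to $p_k$, yielding $\mathbb{E}_S[\Delta^{t+1}] = \sum_{k=1}^N p_k \boldv_k^{t+1} = \avgv^{t+1}$, as claimed.

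The main obstacle is conceptual rather than computational: one must argue that the marginal law of $\bC_t$ at the round in question is the stationary distribution $\pi$, which is precisely what makes the per-round selection probability coincide with the long-term rate $r_k$ from \cref{eq:client-rate}. This is where \cref{assum:constraint_process} enters, through irreducibility and the existence of $\pi$; if the chain is not started in stationarity the equality $P(k\in S)=r_k$ holds only asymptotically, so the cleanest statement is made in the stationary regime. A minor point worth recording is that the importance weights require $r_k>0$ for every client with $p_k>0$ (i.e.\ $\boldr$ in the positive orthant of $\calR$), which guarantees the estimator is well defined; under that condition the computation above goes through verbatim.
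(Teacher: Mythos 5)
Your proof is correct and follows essentially the same route as the paper's: rewrite $\Delta^{t+1}$ with membership indicators, use linearity of expectation, and identify $P(k\in S)=\sum_{C\in\calC}\pi(C)\sum_{S\in C}f_{C,S}(\mathds{1}_S)_k=r_k$ from the definition of the long-term participation rate in \cref{eq:client-rate}. Your added remarks on requiring the stationary regime for $\bC_t$ and $r_k>0$ for well-definedness are sensible refinements the paper leaves implicit, but they do not change the argument.
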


\begin{proof}
Recall that at any given time $t$ we pick $S \in C_t$ for some $C_t \in \calC$ according to $f^r_{C_tS}$. 
Using the definition of $\pi$ and $\bm{f}^r$, 

\begin{align}
 \Expec{S}{\Delta^{t+1}} &=  \Expec{}{ \Expec{}{\sum_{k \in S} \frac{p_k}{r_k}\boldv_k^{t+1} \big| C}}
 = \sum_{C \in \calC}\pi(C) \sum_{S \in C} f^r_{CS} \sum_{k \in S}\frac{p_k}{r_k}\boldv_k^{t+1}\\
 &=\sum_{C\in\calC}\pi(C)\sum_{S \in C} f^r_{CS} \sum_{k=1}^N \frac{p_k}{r_k}\boldv_k^{t+1}\cdot \mathds{1}_{\{k\in S\}},
\end{align}
where the last step replaces the sum over $S$ by the sum over all clients but adding the indicator function over $S$. Reorganizing,
\begin{align}
   \Expec{S}{\Delta^{t+1}}
   &= \sum_{k=1}^N \frac{p_k}{r_k}\boldv_k^{t+1}\left( \sum_{C\in\calC}\pi(C) \sum_{S \in C} f^r_{CS} \mathds{1}_{\{k\in S \}}\right).
\end{align}
Here the term in parenthesis is, by definition,
   
 \[
  r_k = \sum_{k=1}^N \frac{p_k}{r_k}\boldv_k^{t+1} r_k
            = \sum_{k=1}^Np_j\boldv_{t+1}^j=\avgv^{t+1}.
\]

\end{proof}

We proceed by introducing a key lemma derived in \cite{li2019convergence}, characterizing convergence for the full client participation case, and then utilize it to prove Theorem~\ref{thm:importance_convergence}. 

Notice that the learning rate depends on round $t$, $t=0,...,T$ and epoch $i$ $i=1,...,E$.





\begin{lemma}
\label{lemma:full_batch_gradient}
\begin{equation}
    \mathbb{E}[\|\avgw^{t+1} - \boldw^*\|^2] \leq (1-\eta_{(t,E)}^2\mu)\mathbb{E}[\|\avgw^{t}-\boldw^* \|^2] + \eta_{(t,E)}^2\text{Var}_1,
\end{equation}

where $$\text{Var}_1= \sum^N_{k=1}p_k^2\sigma_k^2 + 6L\Gamma + 8(E-1)^2G^2.$$
\end{lemma}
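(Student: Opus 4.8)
The plan is to recognize this as the standard full-participation one-step descent inequality of \citet{li2019convergence}, transcribed into our per-round notation, and to reproduce its proof structure. Throughout I treat $\avgw^{t}$ as the weighted average $\sum_k p_k \boldw_k^{(t,\cdot)}$ of the \emph{full} client population, so that no client-sampling noise enters this lemma (that noise is isolated separately in \cref{lemma:bound-variance}); all expectations are conditional on $\avgw^t$ and taken over the mini-batch draws $\{\xi_k^{(t,i)}\}$. Following the virtual-sequence reindexing of \citet{li2019convergence}, I would work at the level of individual local SGD steps, writing the averaged stochastic gradient $\bar{\mathbf{g}}_t = \sum_k p_k \nabla f_k(\boldw_k^{(t,i)}, \xi_k^{(t,i)})$ and its mean $\mathbf{g}_t = \sum_k p_k \nabla F_k(\boldw_k^{(t,i)})$, so that the averaged iterate obeys $\avgw^{t+1} = \avgw^{t} - \eta_{(t,E)} \bar{\mathbf{g}}_t$.

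First I would expand the squared distance and use the unbiasedness $\mathbb{E}[\bar{\mathbf{g}}_t] = \mathbf{g}_t$ of the stochastic gradient to obtain
\[
\mathbb{E}\|\avgw^{t+1} - \boldw^*\|^2 = \|\avgw^{t} - \boldw^*\|^2 - 2\eta_{(t,E)}\langle \avgw^{t} - \boldw^*, \mathbf{g}_t\rangle + \eta_{(t,E)}^2\,\mathbb{E}\|\bar{\mathbf{g}}_t\|^2 .
\]
Then I would bound the three resulting pieces. For the second-moment term I would split $\mathbb{E}\|\bar{\mathbf{g}}_t\|^2 \le \mathbb{E}\|\bar{\mathbf{g}}_t - \mathbf{g}_t\|^2 + \|\mathbf{g}_t\|^2$ and apply \cref{assum:bound_variance}, so the variance of the average is at most $\sum_k p_k^2\sigma_k^2$, producing the first term of $\mathrm{Var}_1$. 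For the inner-product term I would invoke $\mu$-strong convexity of each $F_k$ (\cref{assum:smooth}) to extract the contraction $-\eta_{(t,E)}\mu\|\avgw^t - \boldw^*\|^2$, and handle the residual with $L$-smoothness together with the definition $\Gamma = F^* - \sum_k p_k F_k^*$ of the heterogeneity gap, which yields the $6L\Gamma$ term.

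The remaining ingredient is the \emph{client-drift} bound $\sum_k p_k\,\mathbb{E}\|\avgw^t - \boldw_k^{(t,i)}\|^2$: since the local models are synchronized to $\avgw^t$ at the start of each round and then run at most $E-1$ further SGD steps whose stochastic gradients are uniformly bounded by $G$ (\cref{assum:bound_gradient_norm}), this divergence is at most $4\eta_{(t,E)}^2(E-1)^2 G^2$, which after the smoothness estimate of the cross term contributes the $8(E-1)^2 G^2$ term. Collecting the three pieces gives $\mathrm{Var}_1 = \sum_k p_k^2\sigma_k^2 + 6L\Gamma + 8(E-1)^2 G^2$, with the diminishing-step-size condition $\eta_{(t,E)} \le \tfrac{1}{4L}$ used to absorb the higher-order terms. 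I note that the clean derivation produces the contraction factor $(1-\eta_{(t,E)}\mu)$ that is linear in the step size; this is the form the downstream recursion in \cref{thm:importance_convergence} actually requires to obtain the $O\!\left(1/(TE+\gamma)\right)$ rate, and the squared exponent appearing in the statement should be read accordingly.

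The step I expect to be the main obstacle is the client-drift bound combined with the cross-term estimate: carefully tracking how the per-client iterates $\boldw_k^{(t,i)}$ separate from their weighted average over the $E$ local steps, and merging that divergence with the strong-convexity/smoothness control of $\langle \avgw^t - \boldw^*, \mathbf{g}_t\rangle$ \emph{without} degrading the $(1-\eta\mu)$ contraction. This is exactly where the $(E-1)^2 G^2$ dependence and the constant in front of $\Gamma$ are pinned down, and it is what makes the bounded-gradient and diminishing-step-size assumptions indispensable.
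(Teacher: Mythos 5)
Your proposal is correct and takes essentially the same approach as the paper: the paper proves this lemma by simply citing the first part of Theorem~1 of \citet{li2019convergence}, whose proof is exactly the virtual-sequence argument you reconstruct (unbiased averaged stochastic gradient, strong-convexity contraction, the $6L\Gamma$ heterogeneity term via smoothness, and the $8(E-1)^2G^2$ client-drift bound under the step-size condition $\eta_{(t,E)}\le \tfrac{1}{4L}$). Your reading of the contraction factor as $(1-\eta_{(t,E)}\mu)$ rather than $(1-\eta_{(t,E)}^2\mu)$ is also consistent with how the lemma is actually invoked in the paper's proof of \cref{thm:importance_convergence}, so the squared exponent in the statement is indeed a typo.
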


\begin{proof}
This result follows from the first part of Theorem 1 in \cite{li2019convergence}, showing the convergence of \fedavg with full client participation (i.e., no client sampling).
\end{proof}

\begin{theorem*}[Theorem~\ref{thm:importance_convergence}]
Instate the settings of \cref{lemma:bound-variance}.
Let $\boldw^*$ denote the solution to the optimization problem (\ref{eq:objective_function}), and $L,\mu=O(1)$.  Define $\gamma = \max \{8\frac{L}{\mu}, E\}$, and assume learning rate $\eta_{(t,i)} = \frac{2}{\mu(\gamma+(tE+i))}$. 
Then by setting \textsc{ClientOpt} to SGD and \textsc{ServerOpt}$(\avgw^t, \Delta^{t+1}) = \avgw^t+\Delta^{t+1}$, the model $\avgw^T$ produced by Algorithm~\ref{alg:gs_fedavg} with policy $\bm{f}^{\boldr}$ satisfies
\begin{equation}
 \mathbb{E}[F(\avgw^T)] - F^* =O \left( \frac{1}{TE+\gamma} \left(  \|\avgw^1 - \boldw^* \|^2
 +\sum^N_{k=1}p_k^2\sigma_k^2 + 6\Gamma + 8(E-1)^2G^2 + \sigma_T^2(\bm{f}^{\boldr}) \right)\right),
\end{equation}
where $\Gamma = F^* - \sum_{k=1}^Np_kF_k^*$ denotes the local-global objective gap\footnote{Local-global objective gap quantifies data heterogeneity: for i.i.d. data, $\Gamma \to 0$ as the number of samples grows, while a large $\Gamma$ indicates a high degree of heterogeneity \cite{li2019convergence, cho2020client}.}, $F^*$ and $F_k^*$ are the minimum values of $F$ and $F_k$, respectively, and  $\sigma^2_T(\bm{f}^{\boldr})$ (\cref{lemma:bound-variance}) captures the variance induced by client sampling.
\end{theorem*}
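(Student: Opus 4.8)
The plan is to reduce the convergence analysis of \cref{alg:gs_fedavg} to the full-participation bound already established in \cref{lemma:full_batch_gradient}, paying for the difference with the sampling variance term $\sigma_T^2(\bm{f}^{\boldr})$ controlled in \cref{lemma:bound-variance}. First I would set up the one-step recursion for the squared distance to the optimum. The key observation is that the server update can be written as $\avgw^{t+1} = \avgw^t + \Delta^{t+1}$, and by the unbiasedness established in \cref{lemma:unbiased}, $\Expec{S_t}{\Delta^{t+1}} = \avgv^{t+1}$, so the stochastic update agrees in expectation with the full-participation update $\avgz^{t+1} = \avgw^t + \avgv^{t+1}$. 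The natural decomposition is therefore
\begin{equation*}
\Expec{}{\|\avgw^{t+1} - \boldw^*\|^2} = \Expec{}{\|\avgz^{t+1} - \boldw^*\|^2} + \Expec{}{\|\avgw^{t+1} - \avgz^{t+1}\|^2},
\end{equation*}
where the cross term vanishes precisely because $\Expec{S_t}{\avgw^{t+1} - \avgz^{t+1}} = \Expec{S_t}{\Delta^{t+1} - \avgv^{t+1}} = 0$ (conditioned on $\avgw^t$ and the local updates, which are independent of the sampling draw $S_t$).

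Next I would bound the two pieces separately. The first piece, $\Expec{}{\|\avgz^{t+1} - \boldw^*\|^2}$, is exactly the full-participation quantity, so \cref{lemma:full_batch_gradient} applies and gives $(1-\eta_{(t,E)}\mu)\Expec{}{\|\avgw^t - \boldw^*\|^2} + \eta_{(t,E)}^2\,\mathrm{Var}_1$ with $\mathrm{Var}_1 = \sum_{k=1}^N p_k^2\sigma_k^2 + 6L\Gamma + 8(E-1)^2G^2$. The second piece is, by the definition of the sampling variance in \cref{lemma:bound-variance}, equal to $\eta_t^2\,\sigma_t^2(\bm{f}^{\boldr})$ (up to the identification of the per-round learning rate), and is bounded by $\eta_{(t,E)}^2 \cdot 4E^2G^2(\sum_k p_k/r_k - 1)$ or its tighter uncorrelated-case counterpart. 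Combining, I obtain a recursion of the clean form
\begin{equation*}
\Expec{}{\|\avgw^{t+1} - \boldw^*\|^2} \leq (1-\eta_{(t,E)}\mu)\,\Expec{}{\|\avgw^t - \boldw^*\|^2} + \eta_{(t,E)}^2\,B,
\end{equation*}
where $B = \mathrm{Var}_1 + \sigma_T^2(\bm{f}^{\boldr})$ is a constant absorbing all the variance contributions.

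From the recursion the result is a standard induction argument. With the diminishing step size $\eta_{(t,i)} = \frac{2}{\mu(\gamma + (tE+i))}$ I would show by induction on the global iteration count $s = tE + i$ that $\Expec{}{\|\avgw^t - \boldw^*\|^2} \leq \frac{\nu}{\gamma + s}$ for a suitable constant $\nu = \max\{\gamma\|\avgw^1 - \boldw^*\|^2,\ 4B/\mu^2\}$; the inductive step is the same algebraic manipulation used in \citet{li2019convergence}, relying on $\gamma \geq 8L/\mu \geq E$ to guarantee the step sizes are small enough. Finally, invoking $L$-smoothness, $\Expec{}{F(\avgw^T)} - F^* \leq \frac{L}{2}\Expec{}{\|\avgw^T - \boldw^*\|^2}$, and substituting $s = TE$ converts the distance bound into the stated $O\!\left(\frac{1}{TE+\gamma}\right)$ function-value bound, with $\nu$ expanding into the initialization term $\|\avgw^1 - \boldw^*\|^2$ plus the variance terms.

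The main obstacle I anticipate is the clean handling of the cross term and the bias-variance split at the level of the coupled learning-rate indexing. Because the step size varies within a round ($\eta_{(t,i)}$ over the $E$ local epochs) while \cref{lemma:full_batch_gradient} is stated in terms of $\eta_{(t,E)}$, I must be careful that the variance term $\sigma_t^2(\bm{f}^{\boldr})$, which carries its own $1/\eta_t^2$ normalization in its definition, is scaled by a learning rate consistent with the one appearing in \cref{lemma:full_batch_gradient}; matching these two conventions so that both contributions sit under a common $\eta_{(t,E)}^2$ factor is the delicate bookkeeping step. The genuinely nontrivial mathematical content — the unbiasedness and the variance bound — is already isolated in \cref{lemma:unbiased} and \cref{lemma:bound-variance}, so the remainder is assembly plus the routine induction.
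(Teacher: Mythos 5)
Your proposal is correct and follows essentially the same route as the paper's proof: the orthogonal decomposition around $\avgz^{t+1}$ with the cross term killed by the unbiasedness of $\Delta^{t+1}$ (\cref{lemma:unbiased}), the full-participation recursion from \cref{lemma:full_batch_gradient} plus the sampling-variance term from \cref{lemma:bound-variance}, the standard induction with step size $\eta_{(t,i)}=\tfrac{2}{\mu(\gamma+tE+i)}$, and a final appeal to $L$-smoothness. Your choice of the induction constant $\nu$ matches the paper's $v$ after substituting $\beta=2/\mu$, and the learning-rate bookkeeping issue you flag is precisely the convention ($\sigma_t^2$ normalized by $\eta_{(t,E)}^{-2}$) the paper adopts.
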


\paragraph{A brief outline of the upcoming proof.}
We start by expanding $\| \avgw^{t+1} - \boldw^* \|^2$, a term that measures the distance to the optimum, and bound it by the term characterizing convergence of the full participation scheme \cite{li2018federated} plus an additional variance term emerging due to client sampling (computed in Lemma~\ref{lemma:variance}). We then invoke a standard inductive argument to express $\| \avgw^{t+1} - \boldw^* \|^2$ in terms of $\| \avgw^{1} - \boldw^* \|^2$ and, noting smoothness, finally bound $\mathbb{E}[F(\boldw_T)] - F^*$.

\begin{proof}

\begin{equation}
\label{eq:first-expansion}
      \| \avgw^{t+1} - \boldw^* \|^2 = \| \avgw^{t+1} - \avgz^{t+1}  + \avgz^{t+1} - \boldw^*\|^2 = \underbrace{\|\avgw^{t+1} - \avgz^{t+1} \|^2}_{A_1} + \underbrace{\|\avgz^{t+1}-\boldw^* \|^2}_{A_2} + \underbrace{2 \langle\avgw^{t+1} -\avgz^{t+1}, \avgz^{t+1} - \boldw^*\rangle}_{A_3}.
\end{equation}

Based on Lemma~\ref{lemma:unbiased}, we know $\Delta^{t+1}$ is unbiased, thus $\Expec{}{ \Delta^{t+1}} = \avgv^{t+1}$. 

We use this fact to prove that $A_3 = 0$ as follows:

$$\Expec{}{\langle\avgw^{t+1} -\avgz^{t+1}, \avgz^{t+1} - \boldw^*\rangle} = \Expec{}{\langle\avgw^{t} + \Delta^{t+1} -\avgw^{t}-\avgv^{t+1}, \avgz^{t+1} - \boldw^*\rangle} = \Expec{}{\langle \Delta^{t+1} -\avgv^{t+1}, \avgz^{t+1} - \boldw^*\rangle} = 0.$$

Now, $A_1$ can be bounded using \cref{lemma:bound-variance}, since $\|\avgw^{t+1} - \avgz^{t+1} \|^2 = \| (\avgw^t+\Delta^{t+1}) - (\avgw^t+\avgv^{t+1}) \|^2= \|\Delta^{t+1} - \avgv^{t+1}\|^2$.

Define 

$$\text{Var}_1 = \sum^N_{k=1}p_k^2\sigma_k^2 + 6\Gamma + 8(E-1)^2G^2, \quad \text{Var}_2 := \sigma^2_t(\bm{f}^{\boldr}) = \frac{1}{\eta_{(t,E)}^2}\Expec{}{|| \Delta^{t+1} - \avgv^{t+1} \|^2}.$$

Then by replacing Lemmas~\ref{lemma:full_batch_gradient} and \ref{lemma:variance} in \cref{eq:first-expansion} we have that

\begin{equation*}
    \mathbb{E}\| \avgw^{t+1} - \boldw^* \|^2 \leq(1-\eta_{(t,E)}\mu)\mathbb{E}\|\avgw^t-\boldw^* \|^2 + \eta_{(t,E)}^2( \text{Var}_1+\text{Var}_2).
\end{equation*}

Thanks to \cref{lemma:unbiased}, we find a similar expression to the one in \cite{li2019convergence}, but with different constants coming from different client sampling variance in \cref{lemma:bound-variance}. The rest of the proof then follows standard techniques, e.g., see \cite{li2019convergence}. We repeat those steps for the sake of completeness. 

Let $\beta>\frac{1}{\mu}$, $\gamma>0$, and define $\eta_{(t,i)} = \frac{\beta}{(t-1)E+i + \gamma}$ such that $\eta_{(1,1)}<\min\{\frac{1}{\mu}, \frac{1}{4L}\}$, and $\eta_{(t,1)} \leq 2\eta_{(t,E)}$. 

As a standard technique, we show by induction in $t$ that $\mathbb{E}\| \avgw^{t} - \boldw^* \|^2 \leq \frac{v}{\gamma+tE}$, where 

$$v=\max \{ \frac{\beta^2(\text{Var}_1+\text{Var}_2)}{\beta\mu-1}, (\gamma+1) \|\avgw^1 - \boldw^* \|^2 \}.$$ 

 This holds for $t=1$ trivially, from the definition of $v$. Now assume the claim holds for $t$; starting from the above equation, we have that 
\begin{align}
  \nonumber  \mathbb{E}\| \avgw^{t+1} - \boldw^* \|^2 &\leq(1-\eta_{(t,E)}\mu)\mathbb{E}\|\avgw^t-\boldw^* \|^2 + \eta_{(t,E)}^2( \text{Var}_1+\text{Var}_2) \\
 \label{eq:induction-step}   &\leq \left( 1-\frac{\beta\mu}{tE+\gamma}\right)\frac{v}{tE+\gamma} + \frac{\beta^2(\text{Var}_1+\text{Var}_2)}{(tE+\gamma)^2}\\
\label{eq:add-0-v}    &= \frac{tE+ \gamma-E}{(tE+\gamma)^2}v + \left[ \frac{\beta^2(\text{Var}_1+\text{Var}_2)}{(tE+\gamma)^2} - \frac{\beta\mu -E}{(tE+\gamma)^2}v\right] \\
 \label{eq:recursive-error-bound} &  \leq \frac{v}{tE+\gamma+E} =\frac{v}{(t+1)E+\gamma} .
\end{align}

where \cref{eq:induction-step} follows by induction step and definition of $\eta_{(t,E)}$, \cref{eq:add-0-v} from adding and substracting $\frac{Ev}{(tE+\gamma)^2}$, and the last step by noticing that

\[
    \frac{tE+\gamma-E}{(tE+\gamma)^2} = \frac{(tE+\gamma-E)(tE+\gamma+E)}{(tE+\gamma)^2(tE+\gamma+E)}
    \leq \frac{1}{tE+\gamma+E}.
\]

Finally, by smoothness of $F$, 

\begin{equation}
\label{eq:final-risk}
    \mathbb{E}[F(\avgw^t)] - F^* \leq \frac{L}{2}\mathbb{E}\| \avgw^{t} - \boldw^* \|^2 \leq \frac{v}{\gamma+tE}.
\end{equation}

Setting $ \beta=\frac{2}{\mu}$, $\kappa=\frac{L}{\mu}$ and $\gamma = \max\{8\kappa,E \}-1$, and using Lemma~\ref{lemma:variance} to compute $\text{Var}_2$, we obtain the desired result. 
\end{proof}

\subsection{Proof of \cref{lemma:bound-variance}: Bounded client sampling variance}
\label{sec:proofs_lemmas}

For clarity of the proof of \cref{lemma:bound-variance}, we first introduce the following lemma on the inner product of local models.
\begin{lemma}
\label{lemma:update-norm}
At round  $t$ for any pair of clients $i$ and $j$,

\begin{equation}
  \Expec{}{\langle \boldv_i^t, \boldv_j^t\rangle } \leq 4E^2G^2\eta_{(t,E)}^2,
\end{equation}
where the expectation is taken over the samples in  local SGD steps.

\begin{proof}

Recall that $\boldv_i^t$ represents the user $i$'s update after training locally for $E$ epochs  starting with model $\avgw^{t-1}$. Then,

\[
\|\boldv_i^t \|^2  = \|\boldw_i^{t,E} - \boldw_i^{t,0} \|^2
    =  \| \sum_{\ell=0}^{E-1} \eta_{(t, \ell)} \nabla f_k(\boldw_i^{t,\ell}, \xi_{\ell})\|^2
    \leq E\sum_{\ell=0}^{E-1}\eta^2_{t,0} \|\nabla f_k(\boldw_i^{t,\ell})\|^2
    \leq 4\eta_{(t,E)}^2 E^2G^2,
\]
where we used the Jensen inequality and the fact that $\eta_{(t,\ell)}$ is decreasing
(i.e., $\eta_{(t, 0)}\leq 2\eta_{(t,\ell)}$ for $\ell \leq E$).
Now, since $\langle \boldv_i^t, \boldv_j^t\rangle \leq \max_k  \|\boldv_k^t \|^2$, the result follows. 

\end{proof}
\end{lemma}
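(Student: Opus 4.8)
The plan is to reduce the inner-product bound to a uniform bound on the squared norm of a single client update, and then control that norm by unrolling the $E$ local SGD steps. First I would observe, via Cauchy--Schwarz together with the arithmetic--geometric mean inequality, that for any two vectors $\langle \boldv_i^t, \boldv_j^t \rangle \le \|\boldv_i^t\|\,\|\boldv_j^t\| \le \tfrac{1}{2}\left(\|\boldv_i^t\|^2 + \|\boldv_j^t\|^2\right) \le \max_k \|\boldv_k^t\|^2$. Hence it suffices to establish $\mathbb{E}\|\boldv_k^t\|^2 \le 4E^2G^2\eta_{(t,E)}^2$ for every client $k$, and the inner-product claim follows immediately.

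To bound the single-update norm I would use the definition of the local update from the appendix, namely $\boldv_k^t = \boldw_k^{(t,E)} - \boldw_k^{(t,0)} = -\sum_{\ell=0}^{E-1}\eta_{(t,\ell)}\nabla f_k(\boldw_k^{(t,\ell)},\xi_\ell)$, so that the update is precisely the (signed) sum of the $E$ local gradient steps; the sign is immaterial once we pass to the norm. Applying Jensen's inequality to this sum of $E$ terms gives $\|\boldv_k^t\|^2 \le E\sum_{\ell=0}^{E-1}\eta_{(t,\ell)}^2\|\nabla f_k(\boldw_k^{(t,\ell)},\xi_\ell)\|^2$. Taking the expectation over the mini-batch samples drawn in the local steps and invoking Assumption~\ref{assum:bound_gradient_norm}, which bounds each stochastic gradient norm by $G^2$, leaves $\mathbb{E}\|\boldv_k^t\|^2 \le E\,G^2\sum_{\ell=0}^{E-1}\eta_{(t,\ell)}^2$.

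The one point that requires care is the variable learning rate across the $E$ inner iterations: the step sizes $\eta_{(t,\ell)}$ differ within a round, so I would invoke the monotone-decay property assumed in the convergence analysis, $\eta_{(t,\ell)} \le \eta_{(t,0)} \le 2\eta_{(t,E)}$ for all $\ell \le E$, to replace each $\eta_{(t,\ell)}^2$ by $4\eta_{(t,E)}^2$. The sum of $E$ such terms then collapses to $E \cdot 4\eta_{(t,E)}^2$, yielding $\mathbb{E}\|\boldv_k^t\|^2 \le E\,G^2\cdot 4E\eta_{(t,E)}^2 = 4E^2G^2\eta_{(t,E)}^2$, which completes the norm bound and hence the lemma. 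I do not anticipate a genuine obstacle here; the only things to watch are keeping the factor $E$ produced by Jensen distinct from the factor $E$ coming from the number of summands, and tracking the factor of $4$ that arises from the learning-rate decay, so that the final constant is exactly $4E^2G^2\eta_{(t,E)}^2$ as stated.
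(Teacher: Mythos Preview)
Your proposal is correct and follows essentially the same approach as the paper: reduce the inner product to $\max_k\|\boldv_k^t\|^2$ via Cauchy--Schwarz, unroll the $E$ local SGD steps, apply Jensen, invoke the bounded-gradient assumption, and use the learning-rate decay $\eta_{(t,0)}\le 2\eta_{(t,E)}$ to obtain the factor $4$. The only cosmetic difference is that you present the inner-product reduction first and are slightly more explicit about where the expectation enters, but the argument is the same.
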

Next, we define the random vector $X^{\boldr} \in \{0,1\}^N$ that indicates which clients are selected, and specify its moments. 
\begin{lemma}
\label{lemma:selection-vector} Let $X^{\boldr} \in \{0,1\}^N$ be such that its $i^{th}$ component takes on value $X^{\boldr}_{i} = 1$ if client $i$ is selected at time $t$, and 0 otherwise. Let $\Sigma$ denote the covariance matrix of $X^{\boldr}$. Then $\Expec{}{ X} = \boldr $ and $\text{Var}(X_i) = r_i(1-r_i)$.  
\end{lemma}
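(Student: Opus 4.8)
The plan is to recognize that the selection vector $X^{\boldr}$ is nothing but the indicator vector $\mathds{1}_{S_t}$ of the set $S_t$ chosen at time $t$ under the static policy $\bm{f}^{\boldr}$, and then to read off both requested moments directly: the mean from the defining expression of the long-term participation rate, and the variance from the Bernoulli structure of each coordinate.

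First I would establish the mean. By construction the $i$-th coordinate of $X^{\boldr}$ equals $\mathds{1}_{\{i \in S_t\}}$, so taking the expectation over the (stationary) configuration $\bC_t \sim \pi$ together with the conditional set selection governed by $\bm{f}^{\boldr}$ gives
\begin{equation*}
\Expec{}{X^{\boldr}} = \sum_{C \in \calC} \pi(C) \sum_{S \in C} f_{C,S}\, \mathds{1}_S = \boldr^{\bm{f}} = \boldr,
\end{equation*}
since the middle expression is exactly \cref{eq:client-rate}. Coordinate-wise this reads $\Expec{}{X^{\boldr}_i} = P(i \in S_t) = r_i$.

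For the variance I would exploit that each coordinate is $\{0,1\}$-valued, so $(X^{\boldr}_i)^2 = X^{\boldr}_i$ and hence $\Expec{}{(X^{\boldr}_i)^2} = \Expec{}{X^{\boldr}_i} = r_i$. Therefore
\begin{equation*}
\text{Var}(X^{\boldr}_i) = \Expec{}{(X^{\boldr}_i)^2} - \left( \Expec{}{X^{\boldr}_i} \right)^2 = r_i - r_i^2 = r_i(1-r_i),
\end{equation*}
which is the claimed Bernoulli variance.

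There is essentially no analytical obstacle here; the only point demanding care is the implicit assumption that the configuration process is in stationarity when the mean is evaluated, so that $\Expec{}{X^{\boldr}}$ coincides with the long-term rate $\boldr^{\bm{f}}$ of \cref{eq:client-rate} rather than a transient per-round quantity. Invoking Assumption~\ref{assum:constraint_process} (irreducibility and existence of $\pi$) justifies this identification. I would also note that the lemma deliberately characterizes only the diagonal of $\Sigma$; the off-diagonal entries, which take the form $\Sigma_{ij} = P(i,j \in S_t) - r_i r_j$, depend on the joint selection law and are dealt with where the full covariance $\Sigma$ subsequently enters the variance bound.
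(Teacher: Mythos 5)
Your proposal is correct and takes essentially the same route as the paper, which simply observes that each $X_i$ is Bernoulli with parameter $r_i$; you merely spell out the identification $\Expec{}{X^{\boldr}_i}=r_i$ via \cref{eq:client-rate} and the identity $(X^{\boldr}_i)^2=X^{\boldr}_i$. Your added remark about stationarity under Assumption~\ref{assum:constraint_process} is a reasonable clarification of what the paper leaves implicit.
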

\begin{proof}
This follows trivially from the observation that $X_i$ is a Bernoulli random variable with parameter $r_i$. 
\end{proof}
\begin{lemma}
\label{lemma:variance}
Let $\boldr \in \calR^+$ be an achievable sampling rate, $\bm{f}^{\boldr}$ denote a static configuration-dependent sampling policy achieving $\boldr$, and $X^{\boldr}$ the corresponding selection random vector with covariance $\bSig$. Then for $t =1,...,T$,
\begin{equation}
\label{eq:variance}
    \sigma_t^2(\bm{f}^{\boldr}):= \frac{1}{\eta_{(t,E)}^2}\Expec{} {\|\Delta^{t} - \avgv^{t} \|^2}=  \frac{1}{\eta_{(t,E)}^2}\tr(\bY_{t}\bY_{t}^T \bSig),
\end{equation}
where vector $\frac{p_k}{r_k}\boldv^t_k$ is the $k^{th}$ row of matrix $\bY_t \in \mathbb{R}^{N\times p}$.
\end{lemma}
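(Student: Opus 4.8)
The plan is to express both $\Delta^t$ and $\avgv^t$ as linear functions of the selection indicator vector $X^{\boldr}$, and then recognize the expected squared error as a quadratic form in $X^{\boldr}$. Writing $\boldy_k := \frac{p_k}{r_k}\boldv_k^t$ for the $k$-th row of $\bY_t$ and using the indicator $X_k^{\boldr} = \mathds{1}_{\{k \in S\}}$, I would first observe that $\Delta^t = \sum_{k \in S}\frac{p_k}{r_k}\boldv_k^t = \sum_{k=1}^N X_k^{\boldr}\boldy_k = (X^{\boldr})^\top \bY_t$. Crucially, since $r_k \cdot \frac{p_k}{r_k} = p_k$, the full-participation update satisfies $\avgv^t = \sum_{k=1}^N p_k \boldv_k^t = \sum_{k=1}^N r_k \boldy_k = \boldr^\top \bY_t$, which identifies $\boldr$ as the ``expected selection'' weighting and is consistent with $\mathbb{E}[X^{\boldr}] = \boldr$ from \cref{lemma:selection-vector}.

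Next I would form the difference and expand. Since $\Delta^t - \avgv^t = (X^{\boldr} - \boldr)^\top \bY_t$, its squared Euclidean norm is the scalar quadratic form
\[
\|\Delta^t - \avgv^t\|^2 = (X^{\boldr} - \boldr)^\top \bY_t \bY_t^\top (X^{\boldr} - \boldr).
\]
Taking the expectation over the client selection (conditioning on the local updates $\boldv_k^t$, which depend only on the mini-batch draws and are therefore independent of the sampling outcome $X^{\boldr}$, so $\bY_t$ may be treated as fixed), I would invoke the standard trace identity for quadratic forms: for a random vector $X^{\boldr}$ with mean $\boldr$ and covariance $\bSig$ and a deterministic matrix $M$,
\[
\mathbb{E}\big[(X^{\boldr}-\boldr)^\top M (X^{\boldr}-\boldr)\big] = \tr\big(M\,\bSig\big).
\]
Applying this with $M = \bY_t \bY_t^\top$ gives $\Expec{}{\|\Delta^t - \avgv^t\|^2} = \tr(\bY_t \bY_t^\top \bSig)$, and dividing by $\eta_{(t,E)}^2$ yields the claimed expression for $\sigma_t^2(\bm{f}^{\boldr})$.

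The computation is essentially routine linear algebra once the vectorization is in place, so the steps requiring the most care are bookkeeping (row versus column conventions for $\bY_t$) rather than conceptual. The one subtlety worth spelling out is the justification for treating $\bY_t$ as deterministic inside the expectation: this relies on the independence between the configuration-driven selection $X^{\boldr}$ and the mini-batch randomness generating the updates, which lets the expectation act only on the $(X^{\boldr}-\boldr)(X^{\boldr}-\boldr)^\top$ factor and produce $\bSig$. The trace identity itself is immediate from $(X^{\boldr}-\boldr)^\top M (X^{\boldr}-\boldr) = \tr\!\big(M (X^{\boldr}-\boldr)(X^{\boldr}-\boldr)^\top\big)$ together with linearity of trace and expectation, using $\mathbb{E}[(X^{\boldr}-\boldr)(X^{\boldr}-\boldr)^\top] = \bSig$ from \cref{lemma:selection-vector}.
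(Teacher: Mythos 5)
Your proposal is correct and follows essentially the same route as the paper's proof: both express $\Delta^t$ as a linear function of the selection indicator $X^{\boldr}$ and evaluate the expected quadratic form in $\bY_t\bY_t^\top$ via the decomposition $\mathbb{E}[X_iX_j]=\Sigma_{ij}+r_ir_j$, the only difference being that you center $X^{\boldr}$ at $\boldr$ up front and apply the trace identity directly, whereas the paper first writes $\mathbb{E}\|\Delta^t-\avgv^t\|^2=\mathbb{E}\|\Delta^t\|^2-\|\avgv^t\|^2$ and then cancels the $\|\avgv^t\|^2$ terms. Your explicit remark that $\bY_t$ may be treated as deterministic because the selection is independent of the mini-batch randomness is a point the paper leaves implicit.
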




\begin{proof}
Below $\| \cdot \|$ denotes the $\ell_2-$norm. Using the variance formula and that $\Expec{}{\Delta^{t}} = \avgv^t$ by \cref{lemma:unbiased},
\begin{equation}
   \label{eq:expansion-variance} \Expec{S}{ \|\Delta^t - \avgv^t \|^2}  = \Expec{}{\| \Delta^t\|^2} - \|\avgv \|^2.
\end{equation}
Let us focus on the first term:
\begin{align*}
       \Expec{}{\| \Delta^t\|^2}&= \sum_{C \in \calC}\pi(C)\sum_{S \in C}f_{CS} \| \sum_{k\in S} \frac{p_k}{r_k} \boldv_k^{t}\|^2
       = \sum_{C \in \calC}\pi(C)\sum_{S \in C}f_{CS} \sum_{i,j \in S} \frac{p_ip_j}{r_ir_j}\langle \boldv_i,\boldv_j \rangle \\
       &= \sum_{i,j=1}^N \sum_{C \in \calC}\pi(C)\sum_{S \in C}f_{CS} \frac{p_ip_j}{r_ir_j}\langle \boldv_i,\boldv_j \rangle \mathds{1}_{i\in S} \mathds{1}_{j\in S}
       =\sum_{i,j=1}^N \sum_{C \in \calC}\pi(C)\sum_{S \in C}f_{CS} \frac{p_ip_j}{r_ir_j}\langle \boldv_i,\boldv_j \rangle X_iX_j.
\end{align*}

From the defition of $\bY_t$ and by reorganizing,

\[
\Expec{}{\| \Delta^t\|^2}= \Expec{}{X^T\bY_t\bY_t^TX}.
\]

Introducing $\mathbf{B}=\bY_t\bY_t^T$,

\begin{align}
 \nonumber \Expec{}{\| \Delta^t\|^2}  &=  \Expec{}{ \sum_{i,j}b_{ij}X_iX_j}
  = \sum_{i,j}b_{ij} \Expec{}{X_iX_j} \\
  \label{eq:cov-step}  &=  \sum_{i,j}b_{ij} (\mathbf{\Sigma}_{ij} + r_ir_j)\\
 \nonumber &= \sum_{i}[\mathbf{B\Sigma}]_{ii} + \boldr^T\mathbf{B}\boldr \\
 \label{eq:cancel-r-step} & = \tr(\mathbf{Y_tY_t^T \Sigma}) + \|\avgv^t \|^2,
\end{align}
where Eq.~\ref{eq:cov-step} follows by the covariance formula and Eq.~\ref{eq:cancel-r-step} follows due to cancellation in denominator of $\mathbf{B}=\bY_t\bY_t^T$. After combining this with \cref{eq:expansion-variance}, the term $\| \avgv^t\|^2$ cancels and we obtain the desired result. 
\end{proof}

\begin{lemma*}[\cref{lemma:bound-variance}]
Suppose Assumptions~\ref{assum:constraint_process}-\ref{assum:bound_gradient_norm} hold. Let $\boldr \in \calR$ be an achievable sampling rate under the system configuration determined by distribution $\pi$, and $\bm{f}^{\boldr}$ denote a static configuration-dependent sampling policy achieving  $\boldr$. 

Define the client sampling variance $\sigma^2_t(\bm{f}^{\boldr}): = \frac{1}{\eta_t^2}\Expec{S_t}{\|\Delta^{t}-\avgv^t \|^2}$ where $\avgv^t = \sum_{k=1}^Np_k\boldv^k_{t}$ is the update at time $t$ with full client participation. Then

\begin{equation}
    \sigma_t^2(\bm{f}^{\boldr}) \leq 4E^2G^2  (\sum_{k=1}^N\frac{p_k}{r_k} - 1).
\end{equation}
 
Furthermore, if client availabilities are uncorrelated or negatively correlated, then there exists a policy $\bm{f}^r$ such that 
\begin{equation}
    \sigma_t^2(\bm{f}^{\boldr}) \leq 4E^2G^2 \left( \sum_{k=1}^N\frac{p_k^2}{r_k} + \sum_{k=1}^N p_k^2 \right).
\end{equation}
\end{lemma*}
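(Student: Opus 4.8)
The plan is to reduce the statement to the exact trace identity of \cref{lemma:variance} and then bound it purely through the scalar second moments supplied by \cref{lemma:update-norm,lemma:selection-vector}. Writing $\mathbf{B} := \bY_t\bY_t^T$, we have $\sigma_t^2(\bm{f}^{\boldr}) = \frac{1}{\eta_{(t,E)}^2}\tr(\mathbf{B}\bSig)$ with $B_{ij} = \frac{p_ip_j}{r_ir_j}\langle\boldv_i^t,\boldv_j^t\rangle$. Both $\mathbf{B}$ (a Gram matrix of the scaled updates) and $\bSig$ (a covariance matrix) are positive semidefinite, and their diagonals are known in closed form: $B_{ii} = \frac{p_i^2}{r_i^2}\|\boldv_i^t\|^2$ and, by \cref{lemma:selection-vector}, $\Sigma_{ii} = r_i(1-r_i)$. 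Splitting $\tr(\mathbf{B}\bSig) = \sum_{i}B_{ii}\Sigma_{ii} + \sum_{i\ne j}B_{ij}\Sigma_{ij}$ isolates a diagonal part that already carries the desired $1/r_i$ scaling and an off-diagonal part governed by the selection cross-covariances, which is where the two regimes diverge.

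For the first (fully general) bound I would control every entry through the PSD Cauchy--Schwarz inequalities $|B_{ij}|\le\sqrt{B_{ii}B_{jj}} = \frac{p_ip_j}{r_ir_j}\|\boldv_i^t\|\|\boldv_j^t\|$ and $|\Sigma_{ij}|\le\sqrt{\Sigma_{ii}\Sigma_{jj}} = \sqrt{r_i(1-r_i)r_j(1-r_j)}$. Hence $\tr(\mathbf{B}\bSig)\le\sum_{i,j}\frac{p_ip_j}{r_ir_j}\|\boldv_i^t\|\|\boldv_j^t\|\sqrt{r_i(1-r_i)r_j(1-r_j)}$; taking expectation over the local SGD samples and using $\mathbb{E}\|\boldv_i^t\|\|\boldv_j^t\|\le\sqrt{\mathbb{E}\|\boldv_i^t\|^2\,\mathbb{E}\|\boldv_j^t\|^2}\le 4E^2G^2\eta_{(t,E)}^2$ (from \cref{lemma:update-norm}) makes the double sum factor as a square, $\big(\sum_i p_i\sqrt{(1-r_i)/r_i}\,\big)^2$. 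A final Cauchy--Schwarz against $\sum_i p_i = 1$ collapses this to $\sum_i p_i\frac{1-r_i}{r_i} = \sum_i\frac{p_i}{r_i}-1$, giving $\sigma_t^2\le 4E^2G^2(\sum_k\frac{p_k}{r_k}-1)$, valid for \emph{any} policy because only magnitudes of the covariances were used.

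For the sharper bound I would exploit the extra freedom granted by uncorrelated or negatively correlated availabilities to construct a policy whose selection vector has nonpositive cross-covariances, $\Sigma_{ij}\le 0$ for $i\ne j$ (e.g.\ a product-form draw from the available set, whose feasibility and rate-matching must be verified). With this sign information the diagonal term contributes $\sum_i\frac{p_i^2}{r_i}(1-r_i)\le\sum_i\frac{p_i^2}{r_i}$, while the off-diagonal term is bounded in absolute value by $4E^2G^2\eta_{(t,E)}^2\sum_{i\ne j}\frac{p_ip_j}{r_ir_j}|\Sigma_{ij}|$ and is controlled using identities such as $\sum_{j}\Sigma_{ij}\ge 0$ (sharp when the sampled-set size is fixed, where $\sum_{j\ne i}|\Sigma_{ij}| = r_i(1-r_i)$) to yield the residual $\sum_k p_k^2$ correction. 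Combining the two parts gives $\sigma_t^2\le 4E^2G^2(\sum_k\frac{p_k^2}{r_k}+\sum_k p_k^2)$.

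The main obstacle is the off-diagonal covariance sum. In the general case the PSD magnitude bound sidesteps signs, but only at the price of the looser $\sum_k p_k/r_k$ scaling; the genuinely delicate part is the second bound, where I must (i) actually exhibit an achievable policy $\bm{f}^{\boldr}$ realizing the prescribed rate $\boldr$ with nonpositive selection cross-covariances under only a hypothesis on availability correlation, and (ii) bound $\sum_{i\ne j}\frac{p_ip_j}{r_ir_j}|\Sigma_{ij}|$ without reintroducing a factor that spoils the $\sum_k p_k^2/r_k$ scaling. A secondary technical point is that \cref{lemma:update-norm} bounds the updates only in expectation, so all inner-product estimates must be carried out inside $\mathbb{E}_\xi$ via Cauchy--Schwarz rather than pathwise through a $\max_k\|\boldv_k^t\|^2$.
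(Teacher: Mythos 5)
Your first bound is correct and reached by a genuinely different route from the paper's. The paper never invokes the trace identity for this part: it expands $\mathbb{E}\|\Delta^t\|^2$ directly as $\sum_{i,j}\frac{p_ip_j}{r_ir_j}\mathbb{E}\langle\boldv_i^t,\boldv_j^t\rangle P(i,j\in S)$, uses $P(i,j\in S)\le P(i\in S)=r_i$ (resp.\ $r_j$) to collapse the double sum to $\sum_i p_i/r_i$, and then produces the ``$-1$'' by subtracting $\|\avgv^t\|^2$. Your chain ($|B_{ij}|\le\sqrt{B_{ii}B_{jj}}$, $|\Sigma_{ij}|\le\sqrt{\Sigma_{ii}\Sigma_{jj}}$, factor the double sum into $\bigl(\sum_i p_i\sqrt{(1-r_i)/r_i}\bigr)^2$, Cauchy--Schwarz against $\sum_i p_i=1$) is self-contained, obtains the $-1$ from $\sum_i p_i(1-r_i)/r_i$ rather than from the $\|\avgv^t\|^2$ term, and correctly keeps the gradient bound inside the expectation (your closing remark about \cref{lemma:update-norm} holding only in expectation is well taken). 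This part is fine, and arguably cleaner than the paper's.

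The second bound is where your proposal has a real gap, and it is precisely the step you flag as delicate. The paper does not bound the off-diagonal block in absolute value at all. It exhibits the concrete greedy policy $S_t\in\arg\max_{S\in C_t}-\nabla H(\boldr)\cdot\mathds{1}_S$, shows by an ordering argument on the utilities that uncorrelated or negatively correlated availabilities imply $P(j\in S\mid i\in S)\le P(j\in S)$, hence $\Sigma_{ij}\le 0$ for $i\ne j$, then discards the off-diagonal terms and gets the entire bound from the diagonal alone: $\sum_i\frac{p_i^2}{r_i^2}\,r_i(1-r_i)=\sum_i\frac{p_i^2}{r_i}-\sum_i p_i^2\le\sum_i\frac{p_i^2}{r_i}+\sum_i p_i^2$. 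In other words, the $+\sum_k p_k^2$ in the statement is slack from the diagonal, not a budget for the cross terms. Your plan to instead control $\sum_{i\ne j}\frac{p_ip_j}{r_ir_j}|\Sigma_{ij}|$ does not close: with $\sum_{j\ne i}|\Sigma_{ij}|=r_i(1-r_i)$ the $1/(r_ir_j)$ weights are not cancelled and one lands back at a $\sum_k p_k/r_k$-scale quantity, while the sharper bound $|\Sigma_{ij}|\le r_ir_j$ (valid when $\Sigma_{ij}\le0$) only yields $\sum_{i\ne j}p_ip_j\le 1$, which is not $O(\sum_k p_k^2)$. So step (ii) of your outline would not deliver the claimed constant, and step (i) --- actually exhibiting the rate-$\boldr$ policy with nonpositive selection cross-covariances --- is the substantive content of the paper's argument and is left unverified in your sketch. (For completeness: the paper's own discard of the off-diagonal terms tacitly treats $\mathbb{E}\langle\boldv_i^t,\boldv_j^t\rangle\Sigma_{ij}$ as nonpositive, which requires a sign or lower-bound argument on the inner products; your instinct that this block needs explicit control is sound, but the control you propose is not the one that works.)
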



\begin{proof}
 For the first part, taking expectation over the independent local SGD sampling and over the random set of clients  $S$, 
 
 \begin{align*}
     \Expec{}{ \| \Delta^t\|^2} &=     \Expec{}{\| \sum_{k\in S} \frac{p_k}{r_k} \boldv^t_k]\|^2}
     = \Expec{}{ \sum_{i,j\in S} \frac{p_ip_j}{r_ir_j}\langle\boldv_i^t,\boldv_j^t \rangle}
     = \sum_{i,j=1}^N \frac{p_ipj}{r_ir_j}\Expec{}{ \langle\boldv_i^t,\boldv_j^t \rangle }P(i,j \in S)\\
     & \leq 4E^2G^2\eta_{(t,E)}^2\left( \sum_{i=1}^N\frac{p_i^2}{r_i^2}P(i \in S) + \sum_{i=1}^N\sum_{j=1, j \neq i}^N \frac{p_ip_j}{r_ir_j}P(i,j \in S)\right).
 \end{align*}
 Given that $P(i,j \in S) \leq P(i\in S)$ and $P(i\in S) =r_i$,
 
\begin{align*}
 \Expec{}{ \| \Delta^t\|^2} & \leq 4E^2G^2\eta_{(t,E)}^2 \left(\sum_{i=1}^N\frac{p_i^2}{r_i} + \sum_{i=1}^N\sum_{j=1,j\neq i}^N \frac{p_ip_j}{r_ir_j}P(j \in S)\right) \\
        & \leq 4E^2G^2\eta_{(t,E)}^2 \left(\sum_{i=1}^N\frac{p_i^2}{r_i} + \sum_{i=1}^N\frac{p_i}{r_i}\sum_{j=1, j\neq i}^N p_j\right) \\
        & \leq 4E^2G^2\eta_{(t,E)}^2 \left( \sum_{i=1}^N\frac{p_i^2}{r_i} + \sum_{i=1}^N\frac{p_i}{r_i}(1-p_i)\right) \\
        &\leq 4E^2G^2\eta_{(t,E)}^2 \sum_{i=1}^N\frac{p_i}{r_i}.
\end{align*}

Therefore,
\[
\Expec{}{ \| \Delta^t - \avgv^t\|^2} = \Expec {}{\| \Delta^t \|^2} - \|\avgv^t \|^2\\
\leq 4E^2G^2 \eta_{(t,E)}^2 (\sum_{i=1}^N\frac{p_i}{r_i} - 1),
\]
and the result follows. 

For the second part of the lemma, consider a policy $\bm{f}$ with rate $\boldr$ that at time $t$ selects $S$ as

$$S_t \in\arg\max_{S \in C_t } -\nabla H(\boldr) \cdot \mathds{1}_S.$$

Let $u_k$ denote the $k$-th largest utility value, where the individual utilities are defined by vector $-\nabla H(\boldr)$.  W.l.o.g. assume $u_i<u_j$ if $i<j$, and let $A_k$ be the (random) number of users $i$ with the utility less than $u_k$. Let $K$ be a bound on the set size $S$. Let $i,j$ be two users, $i<j$; then $u_i<u_j$. 
Now, since $i,j$ are uncorrelated or negatively correlated, $P(i,j \text{ are available}) \leq P(j \text{ is available})P(i \text{ is available})$. Therefore,
\[
P(j \in S | i \in S ) =  P(j \text{ is available})P(A_k-1 < K)
\leq  P(j \text{ is available})P(A_k  < K) = P(j \in S  ).
\]
Note that from the definition of conditional probability, it follows that the sampling is also uncorrelated since 
\[
    P(i,j \in S) = P(i\in S)P(j \in S | i \in S ) = P(i\in S)P(j\in S)
 = \Expec{}{X_i}\Expec{}{X_j}
 = r_ir_j.
\]
Therefore, we have that $\Sigma_{ij} = \Expec{}{X_iX_j} - \Expec{}{X_i}\Expec{}{X_j} = P(i,j \in S ) - r_ir_j \leq 0$. From Eq.~\ref{eq:cov-step}, we have that $$ \tr(\mathbf{YY^T \Sigma}) = \sum_{i,j}\frac{p_ip_j}{r_ir_j}\langle \boldv_i^t,\boldv_j^t \rangle\mathbf{\Sigma}_{ij};$$
since $\mathbf{\Sigma}_{ij} \leq 0$ for $i\neq j$, 
\begin{align*}
  \tr(\mathbf{YY^T \Sigma}) &= \sum_{i,j}\frac{p_ip_j}{r_ir_j}\langle \boldv_i^t,\boldv_j^t \rangle\mathbf{\Sigma}_{ij}
    \leq 4E^2G^2 \eta_{(t,E)}^2\sum_{i}\frac{p_i^2}{r^2_i}\Sigma_{ii} + \sum_{i\neq j}\frac{p_ip_j}{r_ir_j}\Sigma_{ij} \\
    &\leq 4E^2G^2\eta_{(t,E)}^2 \sum_{i}\frac{p_i^2}{r^2_i}(r_i(1-r_i))
    \leq  4E^2G^2\eta_{(t,E)}^2 \left( \sum_{i}\frac{p_i^2}{r_i} +  \sum_{i} p_i^2 \right),
\end{align*}
where the first inequality follows from \cref{lemma:update-norm} and breaking the sum on diagonal and non diagonal terms. The second inequality follows by dropping negative terms and replacing the variance value $\Sigma_{ii} = r_i(1-r_i)$ for $X_i$ (\cref{lemma:selection-vector}), while the last one follows simply by expanding the previous term.

\end{proof}

\subsection{Proof of Theorem~\ref{thm:sampling_rate_convergence} }
\begin{theorem*}[Theorem~\ref{thm:sampling_rate_convergence}]
Let $\boldr^{\beta}(t)$ be defined by \cref{alg:gs_fedavg}, following equations~(\ref{eq:gradient_sched}) and (\ref{eq:update_service_rate}). Let $V \subset \mathbb{R}_{+}^N$ be a bounded set, $\epsilon>0$, and let $\boldr^*$ denote the minimizer of the variance function $H(\boldr) = \sum_{k=1}^n \frac{p_k^2}{r_k}$ over $\calR$. Then for $T>0$, depending on $\epsilon$ and $V$,
\begin{equation*}
   \lim_{\beta \downarrow 0}\quad \sup_{\boldr^{\beta}(0) \in V, t>T/\beta}P [\| r^{\beta}(t) - r^*\|>\epsilon]=0.
\end{equation*}
\end{theorem*}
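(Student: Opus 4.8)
The plan is to treat the rate recursion \eqref{eq:update_service_rate} as a constant--step--size stochastic approximation and invoke the ODE (fluid--limit) method. Writing it as $\boldr^{\beta}(t+1)=\boldr^{\beta}(t)+\beta\bigl(\mathds{1}_{S_t}-\boldr^{\beta}(t)\bigr)$, the increment is uniformly bounded (everything lies in $[0,1]^N$) and has the canonical form ``step size times mean field plus mean--zero noise.'' The first step is to identify the mean field by averaging the greedy rule \eqref{eq:gradient_sched} over the configuration process. For a frozen rate $\boldr$ the set selected in configuration $C$ is $S^*(C,\boldr)\in\arg\max_{S\in C}-\nabla H(\boldr)\cdot\mathds{1}_S$, so averaging the indicator against the stationary law $\pi$ of $(\bC_t)_t$ (\cref{assum:constraint_process}) gives the mean direction $g(\boldr):=\sum_{C}\pi(C)\,\mathds{1}_{S^*(C,\boldr)}$. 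The key structural observation is that $g(\boldr)$ is exactly a Frank--Wolfe linear--minimization oracle for $H$ over $\calR$: because a policy may be chosen independently per configuration and the rate map \eqref{eq:client-rate} is linear in $\bm f$, one has $\min_{\boldz\in\calR}\nabla H(\boldr)\cdot\boldz=\sum_C\pi(C)\min_{S\in C}\nabla H(\boldr)\cdot\mathds{1}_S=\nabla H(\boldr)\cdot g(\boldr)$, so $g(\boldr)\in\arg\min_{\boldz\in\calR}\nabla H(\boldr)\cdot\boldz$ and, in particular, $g(\boldr)\in\calR$.

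The resulting limiting dynamics is the continuous--time Frank--Wolfe flow $\dot{\boldr}=g(\boldr)-\boldr$, and the next step is to prove that $\boldr^*$ is its globally asymptotically stable equilibrium, using $H$ as a Lyapunov function. Since $g(\boldr)\in\calR$ always, the closed convex set $\calR$ (\cref{lemma:constraint_region}) is forward--invariant and globally attracting under the flow, so it suffices to argue on $\calR$. For $\boldr\in\calR$ one computes $\tfrac{d}{dt}H(\boldr)=\nabla H(\boldr)\cdot(g(\boldr)-\boldr)\le 0$, because $g(\boldr)$ minimizes $\nabla H(\boldr)\cdot\boldz$ over $\calR\ni\boldr$. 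The Frank--Wolfe gap $\nabla H(\boldr)\cdot(\boldr-g(\boldr))$ vanishes precisely at a point satisfying the first--order optimality condition for $\min_{\calR}H$; as $H$ is strictly convex on the positive orthant, that point is the unique minimizer $\boldr^*$. Hence $H(\boldr(t))\downarrow H(\boldr^*)$ and $\boldr(t)\to\boldr^*$ from every initial condition.

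With global asymptotic stability of the mean ODE established, I would transfer it to the stochastic iteration via a standard weak--convergence theorem for constant--step--size stochastic approximation with Markov ``state--dependent'' noise (e.g.\ Kushner--Yin, Ch.~8). The averaging is legitimate because, by \cref{assum:constraint_process} together with the geometric mixing of \cref{thm:mixing-time}, the configuration chain equilibrates on a time scale that is fast relative to the $O(1/\beta)$ horizon on which $\boldr^{\beta}$ moves; the interpolated trajectories $\boldr^{\beta}(\lfloor\,\cdot\,/\beta\rfloor)$ are tight and every weak limit solves the mean flow. The cited theorem then delivers exactly the claimed statement: the interpolated process concentrates near the stable point $\boldr^*$ uniformly over bounded initial conditions $\boldr^{\beta}(0)\in V$ and over times $t>T/\beta$, yielding $\lim_{\beta\downarrow 0}\sup_{\boldr^{\beta}(0)\in V,\,t>T/\beta}P[\|\boldr^{\beta}(t)-\boldr^*\|>\epsilon]=0$.

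I expect the main obstacle to be two regularity points, the first being the principal one. The greedy map $\boldr\mapsto S^*(C,\boldr)$ is discontinuous at rates where coordinates of $-\nabla H(\boldr)$ tie, so $g$ is only piecewise continuous; I would handle this by passing to the differential--inclusion form $\dot{\boldr}\in\mathrm{co}\,g(\boldr)-\boldr$ and noting that the Lyapunov computation is insensitive to tie--breaking, since every element of the oracle set attains the same value of $\nabla H(\boldr)\cdot\boldz$. Second, $\nabla H$ blows up as any $r_k\to 0$, so the iterates must stay in a compact subset of the interior; this follows because $H$ is nonincreasing along the flow (keeping it below a level set where $H<\infty$) and, at the discrete level, because the utility $-\partial H/\partial r_k=p_k^2/r_k^2$ diverges as $r_k\downarrow 0$, making an underserved client maximally attractive whenever it is feasible---which by irreducibility occurs infinitely often---so the rates are self--correcting and bounded away from the boundary.
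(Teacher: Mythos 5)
Your proposal is correct in substance, and it reconstructs from scratch the argument that the paper simply outsources: the paper's proof of \cref{thm:sampling_rate_convergence} consists of citing Stolyar's fluid-sample-path theorems (Theorems 3 and 4 of the 2005 reference) for exactly this greedy gradient-scheduling recursion, then adding a continuous-mapping step and an explicit ``restart at time $\tau$'' argument to get the supremum over $t>T/\beta$ and over $\boldr^{\beta}(0)\in V$. Your route---constant-step-size stochastic approximation \`a la Kushner--Yin, identification of the averaged drift $g(\boldr)$ as the Frank--Wolfe linear-minimization oracle for $H$ over $\calR$ (using the linearity of \cref{eq:client-rate} in $\bm f$ and the per-configuration decomposition), and the Lyapunov computation $\frac{d}{dt}H(\boldr)=\nabla H(\boldr)\cdot(g(\boldr)-\boldr)\le 0$ with equality only at $\boldr^*$ by strict convexity---is precisely the mechanism inside the cited theorems, so the two proofs are mathematically the same skeleton presented at different levels of self-containment. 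What your version buys is transparency: the Frank--Wolfe interpretation of the greedy rule is never stated in the paper, and your two flagged regularity issues (the discontinuity of the argmax map, handled by passing to a differential inclusion, and the blow-up of $\nabla H$ near the boundary, handled by the self-correcting utility $p_k^2/r_k^2$) are real and are silently absorbed by the citation in the paper's proof. What the paper's version buys is that the uniformity of the limit over initial conditions and over all $t>T/\beta$ comes packaged with Stolyar's statements plus the restart trick, whereas your appeal to ``a standard weak-convergence theorem'' leaves that uniformity slightly generic---if you write this up in full, you should make the restart/shift argument explicit (as the paper does with $\sup_{\tau\ge 0}$ over windows $[\tau+T,\tau+T+\delta]$), and you should note, as the paper does implicitly via $\bar\mu=\max_{C,S}\frac{1}{|S|}$, that the iterates remain in a fixed compact set so that tightness of the interpolated trajectories holds uniformly over $V$.
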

\begin{proof}
Our proof follows the stochastic approximation analysis in \cite{stolyar2005asymptotic} that establishes an attraction property for Fluid Sample Paths (FSPs), which are limiting trajectories of the generalized versions of the processes considered in our manuscript. The idea behind the proof is that as $t$ grows, we can study the limiting trajectories (i.e., FSPs) $x=(x(t), t \geq 0)$ of the process $r(t/\beta)$. Since $-H(\boldr)$ is a convex function, it follows from Theorem 4 in \cite{stolyar2005asymptotic} that for an arbitrary initial state $x(0)$, $x(t) \to \boldr^*$ as $t \to \infty$. More concretely, from Theorem 3 in \cite{stolyar2005asymptotic} it follows that as $\beta \downarrow 0$, a limit of sequence $\{\boldr^{\beta}\}$  considered in Section~\ref{sec:model} of our paper is a process with sample paths being FSPs $x$ with probability 1. 

Assume $V \subset[0,a]^N$ such that $a>\bar{\mu}$, where $\bar{\mu} = \max_{C\in \calC,S \in S} \frac{1}{|S|}$. Let $\epsilon >0$ and $\delta>0$. By the above result on FSPs (i.e., by Theorem 4 in \cite{stolyar2005asymptotic}), we can find $T$ large enough such that $\|x(t) - \boldr^*\| \leq \epsilon$ uniformly for $t$ in the interval $[T, T+\delta]$. 
Combining this result with the continuous mapping theorem  \cite{billingsley2013convergence}, we obtain
\begin{equation*}
     \lim_{\beta \to 0} \sup_{r^{\beta}\in V} P (\sup_{t \in [T,T+\delta]}\|x(t) - \boldr^*\| >\epsilon) = 0.
\end{equation*}
Also, notice that for all $t$, $r^{\beta}_i(t) \leq \max\{ \bar{\mu}, \boldr^{\beta}_i(0)\}$ element-wise, where  $\bar{\mu} = \max_{C\in \calC,S \in S} \frac{1}{|S|}$ by construction. Then for any $\tau \geq 0$ we can re-start the process, implying
\begin{equation*}
     \lim_{\beta \to 0} \sup_{r^{\beta}\in V} \sup_{\tau \geq 0} P(\sup_{t \in [\tau+T,\tau+T+\delta]}\|x(t) - \boldr^*\| >\epsilon) = 0
\end{equation*}
and thus establishing the desired result.
\end{proof}

\section{Experiments details}
\label{sec:app_experiments}

\paragraph{Hyperparameter tuning.}
We set the learning rate on Shakespeare and CIFAR100 according to the optimal values found in \cite{reddi2020adaptive}. For the synthetic dataset we use the learning rate tuned in \cite{li2018federated}, $\eta=0.01$. For Shakespeare we use mini-batches of size 4, and mini-batches of size 20 for the remaining datasets. Following literature, in all the experiments we use $\beta = O(1/T) = 0.001$.

\paragraph{Machines.} We ran our experiments on AMD Vega 20 (ROCm) cards. One rounds of training in Fig. 1 require 8 GPU seconds for CIFAR100,
67 GPU seconds for Shakespeare, and 0.57 GPU seconds  for Synthetic(1,1).

\subsection{Datasets}
\paragraph{Synthetic dataset.}
We generate this data by taking $10^4$ samples $X_i \in \mathbb{R}^{100} \sim \mathcal{N}(0,I_{100})$. Moreover, we generate $\beta\sim \mathcal{N}(0,I_{100})$ and, finally, set labels $y_i = round(X_i^T\beta)$. The samples are split evenly among $100$ clients. 

\paragraph{Skakespeare. }
Each client's dataset is restricted to have at most 128 sentences, and is split into training and validation sets.  Following the previous work with this dataset \citep{reddi2020adaptive}, we use a build vocabulary with 86 characters contained in the text, and 4 characters representing padding, out-of-vocabulary, beginning, and end of line tokens. We use padding and truncation to enforce 20 word sentences, and represent them with index sequences corresponding to the vocabulary words, out of vocabulary words, beginning and end of sentences.
\subsection{Federated Datasets Details}
Number of clients and total number of samples is summarized in \cref{table:datasets}.

\begin{table}
\caption{Datasets}
\label{table:datasets}
\centering
\begin{tabular}{@{}lcc@{}}
\toprule
Dataset       & Users & Samples \\ \midrule
Synthetic        & 100  & 60 K    \\
CIFAR100     &   500     &    50 K   \\
Shakespeare & 715 & 16 K   \\ \bottomrule
\end{tabular}
\end{table}

\subsection{Models}
\label{sec:app:architectures}
We train a recursive neural network for the next character prediction that first embeds characters into an 8-dimensional space, followed by 2 LSTMs and finally a dense layer. The architecture is presented in Table \ref{tab:ShakespeareArchit}.

 ResNet-18 architecture can be found in \cite{he2016deep}, where we replace batch normalization by group normalization \cite{wu2018group} as in \cite{reddi2020adaptive}.

\begin{table}
\caption{ Shakespeare next character prediction model architecture}
\label{tab:ShakespeareArchit}
\centering
\begin{tabular}{@{}lccc@{}}
\toprule
\multicolumn{1}{c}{\textbf{Layer}} & \textbf{Output} & \textbf{\begin{tabular}[c]{@{}c@{}}\# Trainable \\ parameters\end{tabular}} & \textbf{Activation}  \\ \midrule
Input                              & 80              & \multicolumn{1}{l}{}                                                        & \multicolumn{1}{l}{} \\
Embedding                          & (80,8)         & 720                                                                      &                      \\
LSTM                               & (80,256)        & 271360                                                                     &                      \\
LSTM                               & (80,256)         & 525312                                                                    &                      \\
Dense                              & (80,90)      & 23130                                                                     & Softmax              \\ \midrule
\multicolumn{1}{c}{\textbf{Total}} &                 & 4,050,748                                                                 &                      \\ \bottomrule
\end{tabular}
\end{table}

\subsection{Availability models}
\label{sec:app:avail_model}

For the {\em Home-devices} model $t_k\sim lognormal(0,0.5)$, while for the {\em Smartphones} $t_k\sim lognormal(0,0.25)$. The sine wave is defined by $f(t) = 0.4\sin(t)+0.5$ and we sample at times $t =\frac{2\pi j}{24}$ for $j=1, ...,24$.

\subsection{Loss and accuracy values under independent availability model.}
\cref{fig:stable_convergence_loss} shows that \vargrad exhibits a much more stable convergence for all data sets and achieves a smaller loss value.  \cref{fig:averaged-accuracy} also proves \feast is more resilient: it maintains similar performance across 3 independent runs. 

\begin{figure}[h]
       \centering
    \includegraphics[width=\textwidth]{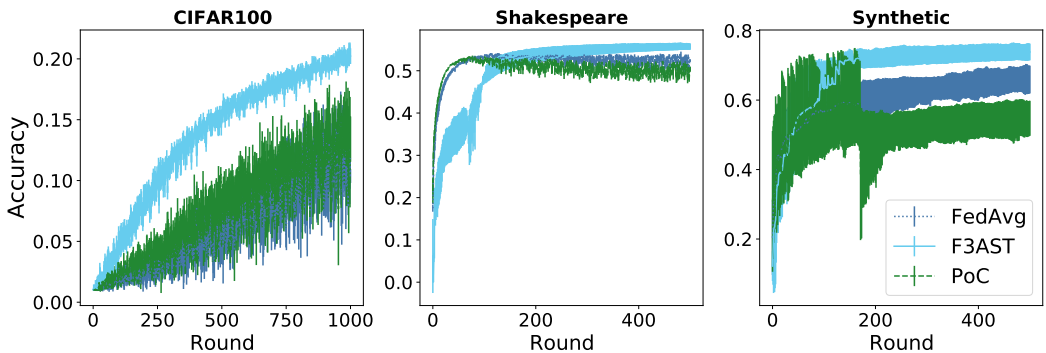}
    \caption{Test per-sample accuracy of different algorithms over all data sets (averaged over three runs). In all cases \vargrad converges to a model with higher accuracy. Furthermore, \vargrad stabilizes while \fedavg and \poc are not able to adapt to the time-varying environment. \feast also provides the smallest variance across independent runs.}
        \label{fig:averaged-accuracy} 
\end{figure}

\subsection{Varying the communication constraint.}

\cref{fig:num_clients} shows the test accuracy during training for the three algorithms (\vargrad, \fedavg and \poc\!). We observe that \vargrad achieves equal or higher performance than competing methods across all communication levels. Note that \poc stagnates at a similar accuracy in all cases; we believe this is due to the inherent bias in the algorithm due to top-k loss based sampling, as reported by the authors. It is possible that certain groups of clients are never selected by this policy. It is interesting to note that the gap between \vargrad and two baselines widens as the number of selected clients increases. Indeed, as the number of users grow, a configuration-dependent policy becomes much harder to facilitate since the number of possible selections grows exponentially with $K$. Nevertheless, the greedy nature of \vargrad allows it to keep selecting the set of users that maximizes marginal utility and achieves a balanced sampling rate under the availability model. The other two policies, however, do not track previous selections of users and thus may end up over-selecting available users rather than exploring the full pool of devices. 

\begin{figure*}[h]
       \centering
    \includegraphics[width=\textwidth]{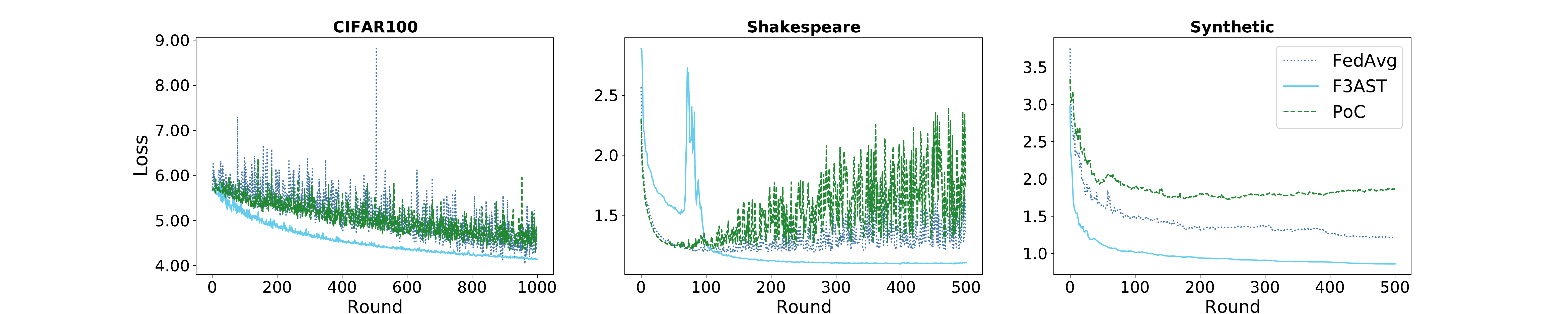}
    \caption{Test per-sample loss of different algorithms over all data sets. In all cases \vargrad converges to a model with smaller objective value. Furthermore, \vargrad stabilizes while \fedavg and \poc are not able to adapt to the time-varying environment.}
        \label{fig:stable_convergence_loss} 
\end{figure*}

 \begin{figure*}[h]
    \centering
    \includegraphics[width=\textwidth]{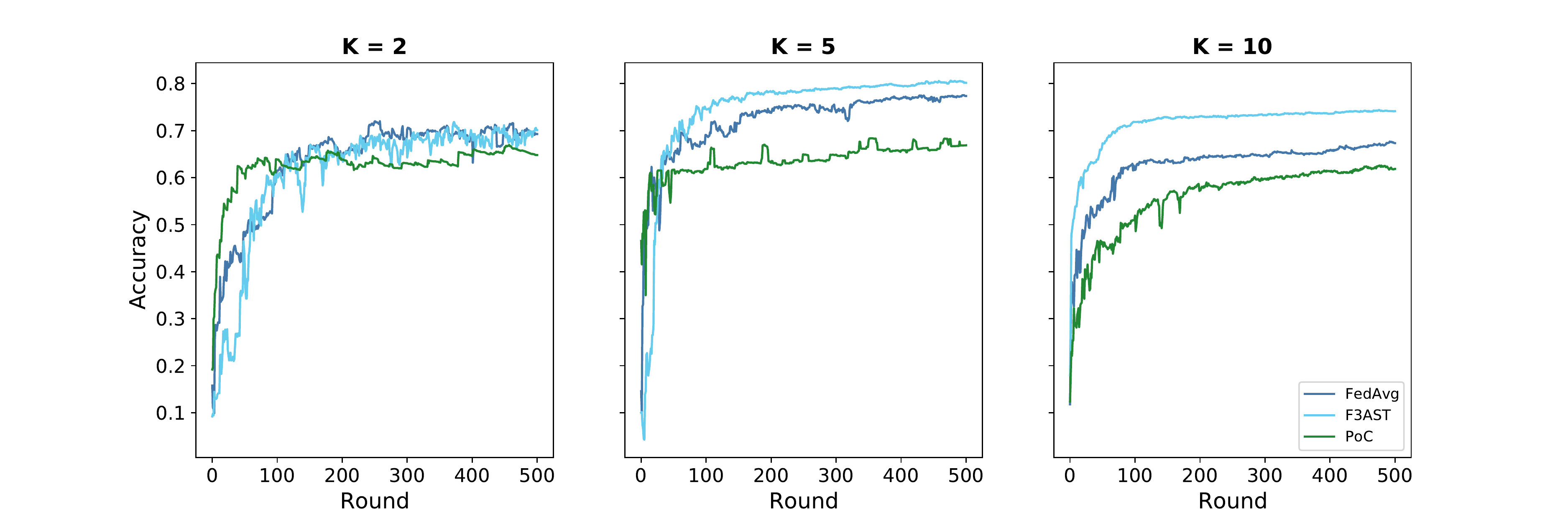}
    \caption{Impact of varying communication level $K$ in the Synthetic(1,1) dataset experiments. As the number of sampled clients increases, the gap between \vargrad and competing methods widens. }
    \label{fig:num_clients}
\end{figure*}

\subsection{Additional user/sample results and accuracy visualization. }

\cref{fig:table} presents average user and sample loss over CIFAR100 and Shakespeare for all availability models. User average first computes the local test metric and then averages over users, while sample metrics computes the metric averaged over all test samples. Typically in deep learning test sample accuracy reflects better performance. However, in federated learning uniform performance across users is desired even if some users have very few datapoints. Below we show that claims in \cref{sec:experiments} are also valid for user-averaged metrics. Notice that sample accuracy and user accuracy coincide in CIFAR100 given that data is split in a balanced way among clients so we omit these results.

\begin{figure*}
\centering
\begin{subfigure}{0.45\textwidth}
    \centering
    \includegraphics[width =\textwidth]{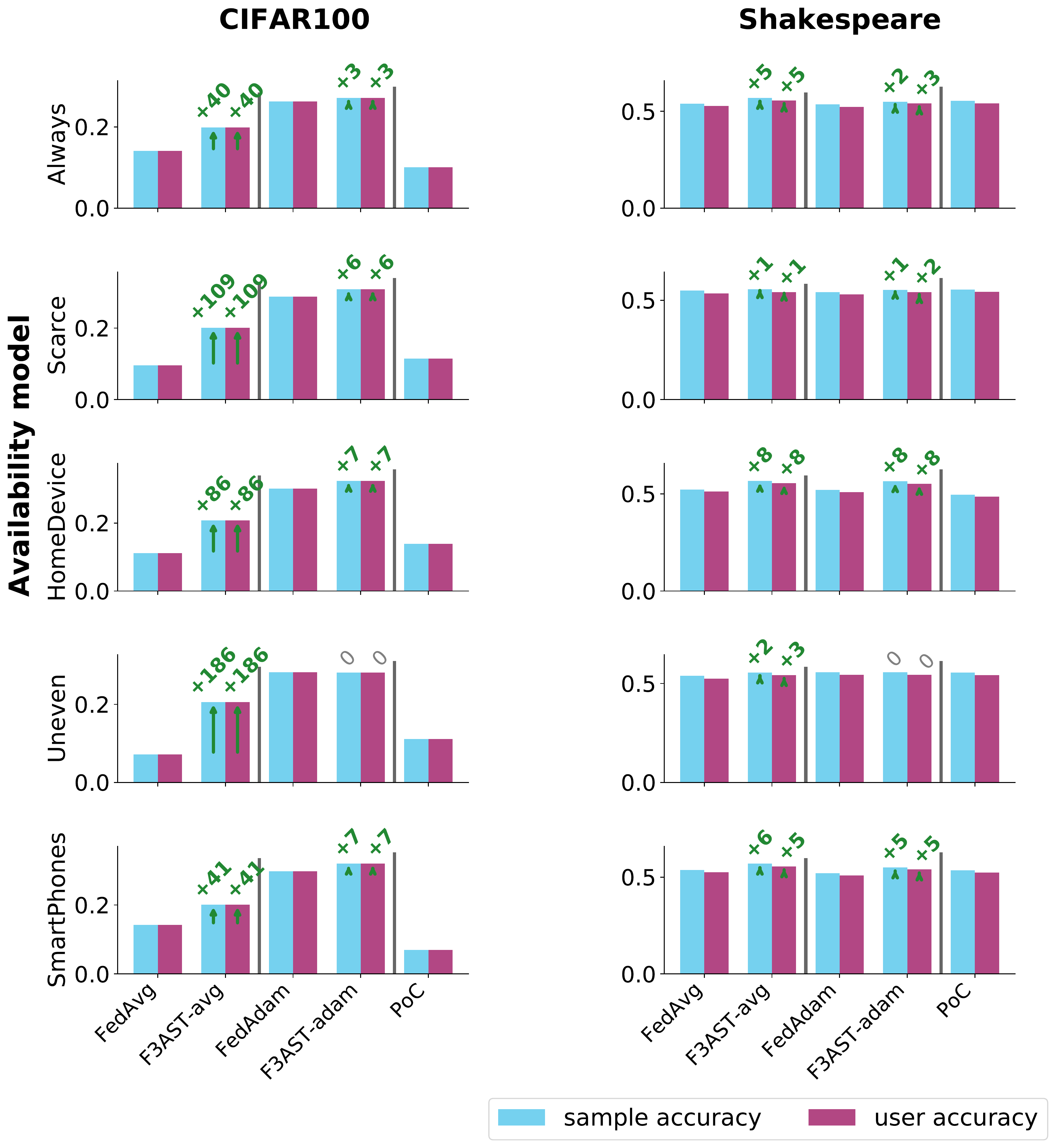}
    \caption{Test accuracy}
\end{subfigure}
\begin{subfigure}{0.45\textwidth}
    \centering
    \includegraphics[width=\textwidth]{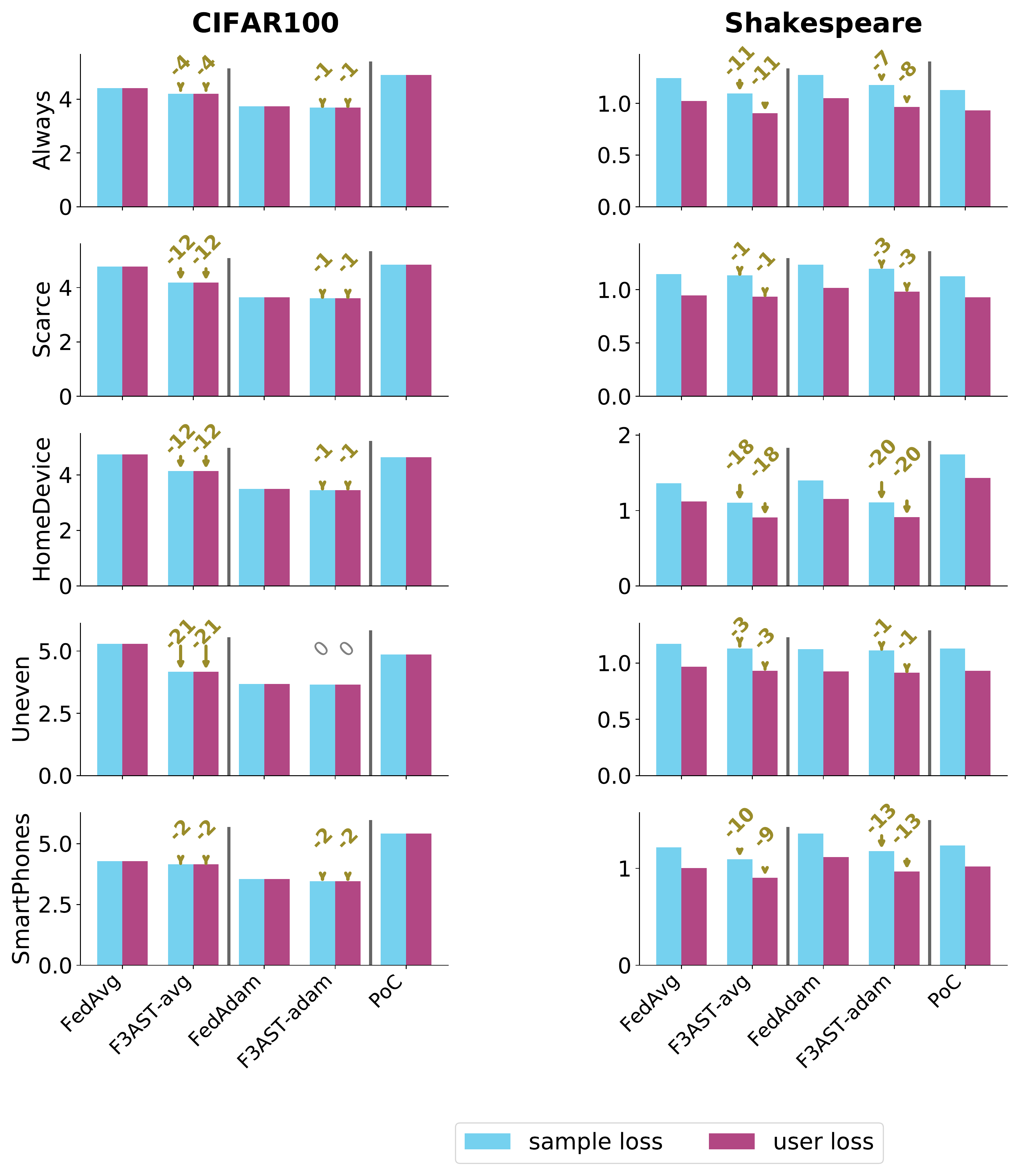}
    \caption{Train loss}
    \label{fig:loss}
\end{subfigure}
    \caption{Accuracy and relative gain of \vargrad over \fedavg and \fedadam for different availability models (rows) and datasets (columns).  \vargrad consistently improves both the unweighted and weighted accuracy.}
    \label{fig:table}
\end{figure*}

\begin{table}[]
\caption{User accuracy of all methods, and relative improvements of \vargrad over \fedavg and \vargrad+ Adam over \fedadam\!\!, for different availability models (columns) on Shakespeare (500 rounds).}
\centering
\begin{tabular}{@{}llllll@{}}
\toprule
\multicolumn{1}{r}{Availability model}      &    Always       &      Scarce     &      HomeDevice     &      Uneven     &      SmartPhones \\
Routine     &            &            &            &            &        \\
\midrule
\fedavg &             0.529     &     0.534     &     0.512     &     0.525     &     0.526 \\
\feast &             \textbf{0.556} \textcolor{cb-blue}{(+5\%)}    &     \textbf{0.542}  \textcolor{cb-blue}{(+1\%)}   &     \textbf{0.556}  \textcolor{cb-blue}{(+8\%)}   &     0.543  \textcolor{cb-blue}{(+3\%)}   &     \textbf{0.556} \textcolor{cb-blue}{(+5\%)} \\
\fedadam &             0.524     &     0.529     &     0.509     &     0.544     &     0.510 \\
\feast+Adam &             0.541  \textcolor{cb-blue}{(+3\%)}   &     0.541  \textcolor{cb-blue}{(+2\%)}     &     0.552  \textcolor{cb-blue}{(+8\%)}      &     \textbf{0.544}    \textcolor{gray}{(0\%)}   &     0.540  \textcolor{cb-blue}{(+5\%)}  \\
\poc &             0.541     &     \textbf{0.543}     &     0.486     &     0.543     &     0.524\\
\bottomrule
\end{tabular}
\end{table}

\begin{table}[]
\caption{User loss of all methods, and relative improvements of \vargrad over \fedavg and \vargrad+ Adam over \fedadam\!\!, for different availability models (columns) on Shakespeare (500 rounds).}
\centering
\begin{tabular}{@{}llllll@{}}
\toprule
\multicolumn{1}{r}{Availability model}      &    Always       &      Scarce     &      HomeDevice     &      Uneven     &      SmartPhones \\
Routine     &            &            &            &            &        \\
\midrule
\fedavg &             1.02     &     0.95     &     1.12     &     0.97     &     1.00 \\
\feast &             \textbf{0.90}  \textcolor{cb-lilac}{(-11\%)}   &     \textbf{0.93} \textcolor{cb-lilac}{(-1\%)}    &     \textbf{0.91}  \textcolor{cb-lilac}{(-18\%)}   &     0.93  \textcolor{cb-lilac}{(-3\%)}   &     \textbf{0.90} \textcolor{cb-lilac}{(-9\%)}\\
\fedadam &             1.05     &     1.02     &     1.16     &     0.93     &     1.12 \\
\feast+Adam &             0.97  \textcolor{cb-lilac}{(-8\%)}   &     0.98 \textcolor{cb-lilac}{(-3\%)}    &     \textbf{0.91}   \textcolor{cb-lilac}{(-20\%)}  &     \textbf{0.92}  \textcolor{cb-lilac}{(-1\%)}   &     0.97 \textcolor{cb-lilac}{(-13\%)} \\
\poc &             0.93     &     \textbf{0.93}     &     1.43     &     0.93     &     1.02 \\
\bottomrule
\end{tabular}
\end{table}

\subsection{Varying heterogeneity in the synthetic dataset. }

\begin{table}[h]
\caption{Test accuracy on the synthetic($\alpha,\alpha$) dataset under the Smartphones availability model. } \label{tab:heterogeneity}
\centering
\begin{tabular}{@{}cccccc@{}}
\midrule
 ~      &    $\alpha=0$      &      $\alpha=0.5$     &      $\alpha=1$      \\
\midrule
\fedavg &             0.72     &     0.72     &     0.68 \\ 
\feast &             0.83     &     0.75     &     0.76 \\  
\midrule
\end{tabular}
\end{table} 
In our paper we consider heterogeneity of both data and clients' availability. Data heterogeneity arises from unbalanced amounts of data generated from heterogeneous distributions. \feast aims to address data heterogeneity by selecting clients whose participation is far from the desired rate. \cref{tab:heterogeneity} shows that for all values of the synthetic data heterogeneity parameter $\alpha$ (Smartphones setting), \feast is more accurate than \fedavg.


\end{document}